\def\tdotoggle{1}
\setlist[itemize]{noitemsep,label=$-$}
\setlist[enumerate]{noitemsep}
\definecolor{Gred}{RGB}{219, 50, 54}
\definecolor{Ggreen}{RGB}{60, 186, 84}
\definecolor{Gblue}{RGB}{72, 133, 237}
\definecolor{Gyellow}{RGB}{247, 178, 16}
\definecolor{ToCgreen}{RGB}{0, 128, 0}
\newtheorem{theorem}{Theorem}[section]
\newtheorem{definition}[theorem]{Definition}
\newtheorem{lem}[theorem]{Lemma}
\newtheorem{cor}[theorem]{Corollary}
\newtheorem{observation}[theorem]{Observation}
\newtheorem{remark}[theorem]{Remark}
\newcommand{\myignore}[1]{}
\newcommand{\TealBlue}[1]{\textcolor{TealBlue}{#1}}
\newcommand{\Peach}[1]{\textcolor{Peach}{#1}}
\newcommand{\Cyan}[1]{\textcolor{cyan}{#1}}
\newcommand{\Red}[1]{\textcolor{red}{#1}}
\newcommand{\Navy}[1]{\textcolor{Blue}{#1}}
\newcommand{\Blue}[1]{\textcolor{Blue}{#1}}
\newcommand{\Green}[1]{\textcolor{OliveGreen}{#1}}
\newcommand{\Black}[1]{\textcolor{black}{#1}}
\newcommand{\White}[1]{\textcolor{white}{#1}}
\newcommand{\Code}[1]{\Cyan{\texttt{#1}}}
\newcommand{\Gray}[1]{\textcolor{gray}{#1}}
\newcommand{\Magenta}[1]{\textcolor{magenta}{#1}}
\newcommand{\Maroon}[1]{\textcolor{Maroon}{#1}}
\newcommand{\tTealBlue}[1]{\ifnum\tdotoggle=1\TealBlue{#1}\else{#1}\fi}
\newcommand{\tPeach}[1]{\ifnum\tdotoggle=1\Peach{#1}\else{#1}\fi}
\newcommand{\tCyan}[1]{\ifnum\tdotoggle=1\Cyan{#1}\else{#1}\fi}
\newcommand{\tRed}[1]{\ifnum\tdotoggle=1\Red{#1}\else{#1}\fi}
\newcommand{\tNavy}[1]{\ifnum\tdotoggle=1\Navy{#1}\else{#1}\fi}
\newcommand{\tBlue}[1]{\ifnum\tdotoggle=1\Blue{#1}\else{#1}\fi}
\newcommand{\tGreen}[1]{\ifnum\tdotoggle=1\Green{#1}\else{#1}\fi}
\newcommand{\tBlack}[1]{\ifnum\tdotoggle=1\Black{#1}\else{#1}\fi}
\newcommand{\tWhite}[1]{\ifnum\tdotoggle=1\White{#1}\else{#1}\fi}
\newcommand{\tCode}[1]{\ifnum\tdotoggle=1\Code{#1}\else{#1}\fi}
\newcommand{\tGray}[1]{\ifnum\tdotoggle=1\Gray{#1}\else{#1}\fi}
\newcommand{\tMagenta}[1]{\ifnum\tdotoggle=1\Magenta{#1}\else{#1}\fi}
\newcommand{\tMaroon}[1]{\ifnum\tdotoggle=1\Maroon{#1}\else{#1}\fi}
\let\OLDthebibliography\thebibliography
\renewcommand\thebibliography[1]{
	\OLDthebibliography{#1}
	\setlength{\parskip}{1.3pt}
}
\newcommand{\inbrace}[1]{\left \{ #1 \right \}}
\newcommand{\inparen}[1]{\left ( #1 \right )}
\newcommand{\insquare}[1]{\left [ #1 \right ]}
\newcommand{\inangle}[1]{\left \langle #1 \right \rangle}
\newcommand{\inabs}[1]{\lvert #1 \rvert}
\newcommand{\norm}[1]{\lVert #1 \rVert}
\newcommand{\set}[1]{\inbrace{#1}}
\DeclareMathOperator*{\sign}{sign}
\DeclareMathOperator*{\argmin}{argmin}
\DeclareMathOperator*{\Ex}{\mathbb{E}}
\renewcommand{\Pr}{\mathbf{Pr}}
\newcommand{\eps}{\varepsilon}
\newcommand{\bbN}{{\mathbb N}}
\newcommand{\bbR}{{\mathbb R}}
\let\boldm\bm
\renewcommand{\bm}{{\boldm m}}
\newcommand{\bc}{{\boldm c}}
\newcommand{\bg}{{\boldm g}}
\newcommand{\bu}{{\boldm u}}
\newcommand{\bw}{{\boldm w}}
\newcommand{\bx}{{\boldm x}}
\newcommand{\by}{{\boldm y}}
\newcommand{\bz}{{\boldm z}}
\newcommand{\calA}{\mathcal{A}}
\newcommand{\calB}{\mathcal{B}}
\newcommand{\calD}{\mathcal{D}}
\newcommand{\calH}{\mathcal{H}}
\newcommand{\calL}{\mathcal{L}}
\newcommand{\calO}{\mathcal{O}}
\newcommand{\calU}{\mathcal{U}}
\newcommand{\calX}{\mathcal{X}}
\newcommand{\calY}{\mathcal{Y}}
\newcommand{\RERM}{{\sf RERM}}
\newcommand{\ind}{\mathbbm{1}}
\newcommand{\vc}{{\rm vc}}
\newcommand{\Risk}{{\rm R}}
\title{\bf Efficiently Learning Adversarially Robust Halfspaces with Noise}
\date{}
\author{
Omar Montasser\\
Toyota Technological Institute at Chicago\\
{\tt omar@ttic.edu}\\
\and
Surbhi Goel\\
University of Texas at Austin\\
{\tt surbhi@cs.utexas.edu}\\
\and
Ilias Diakonikolas\\
University of Wisconsin-Madison\\
{\tt ilias@cs.wisc.edu}\\
\and
Nathan Srebro\\
Toyota Technological Institute at Chicago\\
{\tt nati@ttic.edu}\\
}
\begin{document}

\maketitle

\begin{abstract}
We study the problem of learning adversarially robust halfspaces in the distribution-independent setting. In the realizable setting, we provide necessary and sufficient conditions on the adversarial perturbation sets under which halfspaces are efficiently robustly learnable. In the presence of random label noise, we give a simple computationally efficient algorithm for this problem with respect to any $\ell_p$-perturbation.
\end{abstract}

\section{Introduction}
\label{intro}

Learning predictors that are robust to adversarial examples remains a major challenge in machine learning. A line of work has shown that predictors learned by deep neural networks are {\em not} robust to adversarial examples \cite{DBLP:journals/corr/GoodfellowSS14,biggio2013evasion,DBLP:journals/corr/GoodfellowSS14}. This has led to a long line of research studying different aspects of robustness to adversarial examples.

In this paper, we consider the problem of distribution-independent learning of halfspaces that are robust to adversarial examples at test time, also referred to as robust PAC learning of halfspaces. Halfspaces are binary predictors of the form $h_{\bw}(\bx) = {\rm sign}(\inangle{\bw,\bx})$, where $\bw \in \bbR^d$.

In adversarially robust PAC learning, given an instance space $\calX$ and  label space $\calY = \set{\pm1}$, we formalize an adversary -- that we would like to be robust against -- as a map $\calU: \calX \mapsto 2^{\calX}$, where $\calU(\bx)\subseteq \calX$ represents the set of perturbations (adversarial examples) that can be chosen by the adversary at test time (i.e., we require that $\bx \in \calU(\bx)$). For an unknown distribution $\calD$ over $\calX\times \calY$, we observe $m$ i.i.d.\ samples $S\sim \calD^m$, and our goal is to learn a predictor $\hat{h}: \calX \mapsto \calY$ that achieves small robust risk, 
\begin{equation}
\Risk_{\calU}(\hat{h};\calD) \triangleq \Ex_{(\bx,y) \sim \calD}\!\left[ \sup\limits_{\bz\in\calU(\bx)} \ind[\hat{h}(\bz)\neq y] \right].
\end{equation}

The information-theoretic aspects of adversarially robust learning have been studied in recent work, see, e.g., \cite{DBLP:conf/nips/SchmidtSTTM18,NIPS2018_7307,khim2018adversarial,bubeck2019adversarial,DBLP:conf/icml/YinRB19,pmlr-v99-montasser19a}. This includes studying what learning rules should be used for robust learning and how much training data is needed to guarantee high robust accuracy. It is now known that any hypothesis class $\calH$ with finite VC dimension is robustly learnable, though sometimes improper learning is necessary and the sample complexity may be exponential in the VC dimension \cite{pmlr-v99-montasser19a}.

On the other hand, the computational aspects of adversarially robust PAC learning are less understood. In this paper, we take a first step towards studying this broad algorithmic question with a focus on the fundamental problem of learning adversarially robust halfspaces. 

A first question to ask is whether efficient PAC learning implies efficient {\em robust} PAC learning, i.e., whether there is a general reduction that solves the adversarially robust learning problem. Recent work has provided strong evidence that this is not the case. Specifically, \cite{bubeck2019adversarial} showed that there exists a learning problem  that can be learned efficiently non-robustly, but is computationally intractable to learn robustly (under plausible 
complexity-theoretic assumptions). There is also more recent evidence that suggests that this is also the case in the PAC model. \cite{awasthi2019robustness} showed that it is computationally intractable to even weakly robustly learn degree-$2$ polynomial threshold functions (PTFs) with $\ell_\infty$ perturbations in the realizable setting, while PTFs of any constant degree are known to be efficiently PAC learnable non-robustly in the realizable setting. \cite{gourdeau2019hardness} showed that there are hypothesis classes that are hard to robustly PAC learn, under the assumption that it is hard to non-robustly PAC learn.

The aforementioned discussion suggests that when studying robust PAC learning, we need to characterize which types of perturbation sets $\calU$ admit {\em computationally efficient} robust PAC learners and under which noise assumptions.  In the agnostic PAC setting, it  is known that even weak (non-robust) learning of halfspaces is computationally intractable  \cite{feldman2006new,guruswami2009hardness,DiakonikolasOSW11, daniely2016complexity}. For $\ell_2$-perturbations, where $\calU(x)=\set{z:\norm{x-z}_2\leq \gamma}$, it was recently shown that the complexity of proper learning is exponential in $1/\gamma$~\cite{diakonikolas2019nearly}. 
In this paper, we focus on the realizable case and the (more challenging) case of random label noise.

We can be more optimistic in the realizable setting. Halfspaces are efficiently PAC learnable non-robustly via Linear Programming \cite{maass1994fast}, and under the margin assumption via the Perceptron algorithm \cite{rosenblatt1958perceptron}. But what can we say about robustly PAC learning halfspaces? Given a perturbation set $\calU$ and under the assumption that there is a halfspace $h_\bw$ that robustly separates the data, can we efficiently learn a predictor with small robust risk? 

Just as empirical risk minimization (ERM) is central for non-robust PAC learning, a core component of adversarially robust learning is minimizing the {\em robust} empirical risk on a dataset $S$,
\[\hat{h}\in \RERM_{\calU}(S) \triangleq \argmin_{h\in \calH} \frac{1}{m} \sum_{i=1}^{m} \sup\limits_{\bz\in\calU(\bx)} \ind[h(\bz)\neq y].\]

In this paper, we provide necessary and sufficient conditions on perturbation sets $\calU$, under which the robust empirical risk minimization $(\RERM)$ problem is {\em efficiently} solvable in the realizable setting. We show that an efficient separation oracle for $\calU$ yields an efficient solver for $\RERM_{\calU}$, while an efficient {\em approximate} separation oracle for $\calU$ is necessary for even computing the robust loss $\sup_{\bz\in\calU(\bx)} \ind[h_{\bw}(\bz)\neq y]$ of a halfspace $h_\bw$. In addition, we relax our realizability assumption and show that under random classification noise~\cite{DBLP:journals/ml/AngluinL87}, we can efficiently robustly PAC learn halfspaces with respect to any $\ell_p$ perturbation. 

\paragraph{Main Contributions} Our main contributions can be summarized as follows:
\begin{enumerate}
    \item In the realizable setting, the class of halfspaces is 
    efficiently robustly PAC learnable with respect to $\calU$, given an efficient separation oracle for $\calU$. 
    \item To even compute the robust risk with respect to $\calU$ efficiently, an efficient {\em approximate} separation oracle for $\calU$ is {\em necessary}.
    \item In the random classification noise setting, the class of halfspaces is efficiently robustly PAC learnable with respect to any $\ell_p$ perturbation. 
\end{enumerate}

\subsection{Related Work}
Here we focus on the recent work that is most closely related to the results of this paper.
\cite{awasthi2019robustness} studied the tractability of $\RERM$ with respect to $\ell_\infty$ perturbations, obtaining efficient algorithms for halfspaces in the realizable setting, 
but showing that $\RERM$ for degree-$2$ polynomial threshold functions is computationally intractable (assuming ${\rm NP}\neq {\rm RP}$). \cite{gourdeau2019hardness} studied robust learnability of hypothesis classes defined over $\{0,1\}^n$ with respect to hamming distance, and showed that monotone conjunctions are robustly learnable when the adversary can perturb only $\calO(\log n)$ bits, but are {\em not} robustly learnable even under the uniform distribution when the adversary can flip $\omega(\log n)$ bits. 

In this work, we take a more general approach, and instead of considering specific perturbation sets, we provide methods in terms of oracle access to a separation oracle for the perturbation set $\calU$, and aim to characterize which perturbation sets $\calU$ admit tractable $\RERM$.

In the non-realizable setting, the only prior work we are aware of is by~\cite{diakonikolas2019nearly} who studied the complexity of robustly learning halfspaces in the agnostic setting under $\ell_2$ perturbations.

\section{Problem Setup}

Let $\calX=\bbR^d$ be the instance space and $\calY = \set{\pm1}$ be the label space. We consider halfspaces $\calH=\{ \bx \mapsto \sign(\inangle{\bw,\bx}): \bw\in \bbR^d \}$. 

The following definitions formalize the notion of adversarially robust PAC learning in the realizable and random classification noise settings:
\begin{definition} [Realizable Robust PAC Learning]
We say $\calH$ is robustly PAC learnable with respect to an adversary $\calU$ in the \emph{realizable} setting, if there exists a learning algorithm $\calA: (\calX\times \calY)^* \mapsto \calY^\calX$  with sample complexity $m:(0,1)\to \bbN$ such that: for any $\epsilon,\delta \in (0,1)$, for every data distribution $\calD$ over $\calX \times \calY$ where there exists a predictor $h^*\in \calH$ with zero robust risk, $\Risk_{\calU}(h^*;\calD) = 0$, with probability at least $1-\delta$ over $S \sim \calD^m$,
\[\Risk_{\calU}(\calA(S);\calD) \leq \epsilon.\]
\end{definition}

\begin{definition} [Robust PAC Learning with Random Classification Noise]
Let $h^*\in \calH$ be an unknown halfspace. Let $\calD_{\bx}$ be an arbitrary distribution over $\calX$ such that $\Risk_{\calU}(h^*;\calD_{h^*})=0$, and $\eta \leq 0 < 1/2$. A \textit{noisy example oracle}, ${\rm EX}(h^*,\calD_\bx,\eta)$ works as follows: Each time ${\rm EX}(h^*,\calD_\bx,\eta)$ is invoked, it returns a labeled example $(\bx,y)$, where $\bx\sim\calD_\bx$, $y=h^*(x)$ with probability $1-\eta$ and $y=-h^*(x)$ with probability $\eta$. Let $\calD$ be the joint distribution on $(\bx,y)$ generated by the above oracle.

We say $\calH$ is robustly PAC learnable with respect to an adversary $\calU$ in the \emph{random classification noise} model, if $\exists m(\epsilon,\delta, \eta) \in \bbN \cup \set{0}$ and a learning algorithm $\calA:(\calX\times \calY)^* \mapsto \calY^\calX$, such that for every distribution $\calD$ over $\calX \times \calY$ (generated as above by a noisy oracle), with probability at least $1-\delta$ over $S \sim \calD^m$,
\[\Risk_{\calU}(\calA(S);\calD) \leq \eta +\epsilon.\]
\end{definition}

\paragraph{Sample Complexity of Robust Learning} 
Denote by $\calL^{\calU}_\calH$ the robust loss class of $\calH$,
\begin{equation*}
    \calL^{\calU}_\calH = \left\{(x,y)\mapsto \sup\limits_{z\in\calU(x)} \ind[h(z)\neq y] : h\in \calH \right\}.
\end{equation*}

It was shown by \cite{NIPS2018_7307} that for any set $\calB\subseteq\calX$ that is nonempty, closed, convex, and origin-symmetric, and an adversary $\calU$ that is defined as $\calU(\bx)=\bx+\calB$ (e.g., \ $\ell_p$-balls), the VC dimension of the robust loss of halfspaces $\vc(\calL_{\calH}^{\calU})$ is at most the standard VC dimension $\vc(\calH)=d+1$. Based on Vapnik's ``General Learning'' \cite{vapnik:82}, this implies that we have uniform convergence of robust risk with $m=\calO(\frac{d+\log(1/\delta)}{\epsilon^2})$ samples. Formally, for any $\epsilon,\delta\in(0,1)$ and any distribution $\calD$ over $\calX\times\calY$, with probability at least $1-\delta$ over $S\sim \calD^m$, 
\begin{equation}
\label{eqn:uniform_robust}
    \forall \bw \in \bbR^d, |\Risk_{\calU}(h_\bw;\calD) - \Risk_{\calU}(h_\bw;S)|\leq \epsilon.
\end{equation}

In particular, this implies that for any adversary $\calU$ that satisfies the conditions above, $\calH$ is robustly PAC learnable w.r.t. $\calU$ by minimizing the \emph{robust} empirical risk on $S$,
\begin{equation}
\label{eqn:rerm}
    \RERM_\calU(S) = \argmin\limits_{\bw\in \bbR^d} \frac{1}{m} \sum_{i=1}^{m} \sup_{\bz_i\in\calU(\bx_i)} \ind[y_i\inangle{\bw,\bz_i} \leq 0].
\end{equation}

Thus, it remains to efficiently solve the $\RERM_{\calU}$ problem. We discuss necessary and sufficient conditions for solving $\RERM_{\calU}$ in the following section.
\section{The Realizable Setting}

In this section, we show necessary and sufficient conditions for minimizing the robust empirical risk $\RERM_\calU$ on a dataset $S=\{(\bx_1,y_1),\dots,(\bx_m,y_m)\} \in (\calX \times \calY)^m$ in the realizable setting, i.e.\ when the dataset $S$ is {\em robustly} separable with a halfspace $h_{\bw^*}$ where $\bw^*\in\bbR^d$. In Theorem~\ref{thm:sufficient}, we show that an efficient separation oracle for $\calU$ yields an efficient solver for $\RERM_{\calU}$. While in Theorem~\ref{thm:necessary}, we show that an efficient {\em approximate} separation oracle for $\calU$ is necessary for even computing the robust loss $\sup_{\bz\in\calU(\bx)} \ind[h_{\bw}(\bz)\neq y]$ of a halfspace $h_\bw$.

Note that the set of allowed perturbations $\calU$ can be non-convex, and so it might seem difficult to imagine being able to solve the $\RERM_{\calU}$ problem in full generality. But, it turns out that for halfspaces it suffices to consider only convex perturbation sets due to the following observation: 

\begin{observation}
\label{obs:convex}
Given a halfspace $\bw\in \bbR^d$ and an example $(\bx,y)\in\calX\times \calY$. If  $\forall \bz\in\calU(\bx), y\inangle{\bw,\bz}>0$, then $\forall z\in\mathsf{conv}(\calU(\bx)), y\inangle{\bw,\bz}>0$. And if $\exists z\in\calU(\bx), y\inangle{\bw,\bz}\leq0$, then $\exists z\in\mathsf{conv}(\calU(\bx)), y\inangle{\bw,\bz}\leq 0$, where $\mathsf{conv}(\calU(\bx))$ denotes the convex-hull of $\calU(\bx)$. 
\end{observation}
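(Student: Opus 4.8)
The plan is to exploit the linearity of the map $\bz \mapsto \inangle{\bw,\bz}$ together with the elementary description of the convex hull in $\bbR^d$ as the set of all \emph{finite} convex combinations of points of $\calU(\bx)$ (no closure needed; by Carath\'eodory's theorem one may even restrict to at most $d+1$ points). With this in hand, both implications reduce to one-line arguments about nonnegative combinations.

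For the second implication there is essentially nothing to do: since $\calU(\bx) \subseteq \mathsf{conv}(\calU(\bx))$, any witness $\bz \in \calU(\bx)$ with $y\inangle{\bw,\bz} \le 0$ already lies in $\mathsf{conv}(\calU(\bx))$ and serves as the required witness there.

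For the first implication, I would take an arbitrary $\bz \in \mathsf{conv}(\calU(\bx))$ and write $\bz = \sum_{i=1}^{k} \lambda_i \bz_i$ with $\bz_i \in \calU(\bx)$, $\lambda_i \ge 0$, and $\sum_{i=1}^{k} \lambda_i = 1$. Linearity of the inner product gives $y\inangle{\bw,\bz} = \sum_{i=1}^{k} \lambda_i \, y\inangle{\bw,\bz_i}$. By hypothesis each $y\inangle{\bw,\bz_i} > 0$, so every summand is nonnegative; and since the $\lambda_i$ sum to $1$, at least one of them is strictly positive, so the corresponding summand is strictly positive. Hence $y\inangle{\bw,\bz} > 0$, as desired.

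The only point meriting any care — and the closest thing to an obstacle — is the preservation of the \emph{strict} inequality. This is precisely why it matters that the statement is phrased in terms of $\mathsf{conv}(\calU(\bx))$ rather than its topological closure (over the closure the strict bound could degrade to $\ge 0$ in a limit), and why the normalization $\sum_i \lambda_i = 1$ is invoked, to guarantee the existence of a strictly positive coefficient. Everything else is immediate from linearity.
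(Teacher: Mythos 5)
Your proof is correct and is exactly the standard argument the paper implicitly relies on: the paper states this as an Observation without proof, treating the convex-combination/linearity argument (and the trivial inclusion $\calU(\bx) \subseteq \mathsf{conv}(\calU(\bx))$ for the second claim) as immediate. Your care about preserving the strict inequality via a strictly positive coefficient is a valid and complete justification of that omitted step.
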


Observation~\ref{obs:convex} shows that for any dataset $S$ that is {\em robustly} separable w.r.t. $\calU$ with a halfspace $\bw^*$, $S$ is also robustly separable w.r.t. the convex hull $\mathsf{conv}(\calU)$ using the same halfspace $\bw^*$, where $\mathsf{conv}(\calU)(x)=\mathsf{conv}(\calU(x))$. Thus, in the remainder of this section we only consider perturbation sets $\calU$ that are convex, i.e., for each $\bx$, $\calU(\bx)$ is convex.

\begin{definition}
\label{def:sep}
Denote by $\mathsf{SEP}_{\calU}$ a separation oracle for $\calU$.
$\mathsf{SEP}_{\calU}(\bx,\bz)$ takes as input $\bx,\bz\in\calX$ and either:
\begin{itemize}
    \item[$\bullet$] asserts that $\bz\in\calU(\bx)$, or
    \item[$\bullet$] returns a separating hyperplane $\bw\in\bbR^d$ such that $\inangle{\bw,\bz'}\leq \inangle{\bw,\bz}$ for all $\bz'\in\calU(\bx)$.
\end{itemize}
\end{definition}

\begin{definition}
\label{def:eta-sep}
For any $\eta>0$, denote by $\mathsf{SEP}^{\eta}_{\calU}$ an {\em approximate} separation oracle for $\calU$. $\mathsf{SEP}^{\eta}_{\calU}$ takes as input $\bx,\bz\in\calX$ and either:
\begin{enumerate}
    \item[$\bullet$] asserts that $\bz\in \calU(\bx)^{+\eta} \stackrel{{\rm def}}{=} \set{\bz:\exists \bz' \in \calU(\bx) \text{ s.t. }\norm{\bz-\bz'}_2\leq \eta }$, or
    \item[$\bullet$] returns a separating hyperplane $\bw \in \bbR^d$ such that $\inangle{\bw,\bz'}\leq \inangle{\bw,\bz} + \eta$ for all $\bz'\in \calU(\bx)^{-\eta}\stackrel{{\rm def}}{=}\set{\bz':B(\bz',\eta)\subseteq \calU(\bx)}$.
\end{enumerate}
\end{definition}

\begin{definition}
\label{def:mem}
Denote by $\mathsf{MEM}_{\calU}$ a membership oracle for $\calU$.
$\mathsf{MEM}_{\calU}(\bx,\bz)$ takes as input $\bx,\bz\in\calX$ and either:
\begin{itemize}
    \item[$\bullet$] asserts that $\bz\in\calU(\bx)$, or
    \item[$\bullet$] asserts that $\bz\notin\calU(\bx)$.
\end{itemize}
\end{definition}

When discussing a separation or membership oracle for a fixed convex set $K$, we overload notation and write $\mathsf{SEP}_K$, $\mathsf{SEP}^\eta_K$, and $\mathsf{MEM}_K$ (in this case only one argument is required). 

\subsection{An efficient separation oracle for $\calU$ is sufficient to solve $\RERM_\calU$ efficiently}

Let $\mathsf{Soln}_{S}^{\calU} = \{\bw\in \bbR^d: \forall (\bx,y)\in S, \forall \bz \in \calU(\bx), y\inangle{\bw,\bz} > 0\}$ denote the set of valid solutions for ${\sf{RERM}_{\calU}}(S)$ (see Equation~\ref{eqn:rerm}). Note that $\mathsf{Soln}_{S}^{\calU}$ is not empty since we are considering the realizable setting. Although the treatment we present here is for homogeneous halfspaces (where a bias term is not needed), the results extend trivially to the non-homogeneous case. 

Below, we show that we can efficiently find a solution $\bw \in \mathsf{Soln}_{S}^{\calU}$ given access to a separation oracle for $\calU$, $\mathsf{SEP}_{\calU}$.

\begin{theorem}
\label{thm:sufficient}
Let $\calU$ be an arbitrary convex adversary. Given access to a separation oracle for $\calU$, $\mathsf{SEP}_{\calU}$ that runs in time $\mathsf{poly}(d,b)$. There is an algorithm  that finds $\bw \in \mathsf{Soln}_{S}^{\calU}$ in $\mathsf{poly}(m,d,b)$ time where $b$ is an upper bound on the bit complexity of the valid solutions in $\mathsf{Soln}_{S}^{\calU}$ and the examples and perturbations in $S$.
\end{theorem}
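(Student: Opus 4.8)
The plan is to recognize that, in the realizable setting, finding a point of $\mathsf{Soln}_{S}^{\calU}$ is a \emph{convex} feasibility problem and to solve it with the ellipsoid method, using $\mathsf{SEP}_{\calU}$ to implement the separation oracle that the ellipsoid method needs. The first step is the structural observation that
$\mathsf{Soln}_{S}^{\calU}=\bigcap_{i=1}^{m}\bigcap_{\bz\in\calU(\bx_i)}\set{\bw:y_i\inangle{\bw,\bz}>0}$
is a convex cone in $\bbR^{d}$, being an intersection of homogeneous open halfspaces. So once we have (i) a separation oracle for $\mathsf{Soln}_{S}^{\calU}$, (ii) an outer radius $R$ with $\mathsf{Soln}_{S}^{\calU}\cap\set{\bw:\norm{\bw}\le R}\ne\emptyset$, and (iii) an inner radius $r$ such that this set contains a Euclidean ball of radius $r$, the ellipsoid method returns a feasible $\bw$ after $\mathsf{poly}(d,\log(R/r))$ iterations.

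Next I would build the separation oracle for $\mathsf{Soln}_{S}^{\calU}$ out of $\mathsf{SEP}_{\calU}$. On input $\bw$, for each example $(\bx_i,y_i)$ I would try to minimize $\bz\mapsto y_i\inangle{\bw,\bz}$ over $\bz\in\calU(\bx_i)$: this is a linear program over a convex set presented by a separation oracle, so by the classical equivalence of separation and optimization (ellipsoid in $\bz$-space calling $\mathsf{SEP}_{\calU}(\bx_i,\cdot)$) it can be solved to additive accuracy $2^{-\mathsf{poly}(d,b)}$ with $\mathsf{poly}(d,b)$ oracle calls, and if the objective is unbounded below the same procedure exhibits a recession direction of $\calU(\bx_i)$ on which $y_i\inangle{\bw,\cdot}$ is negative. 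If all $m$ of these minima are safely positive, declare $\bw\in\mathsf{Soln}_{S}^{\calU}$. Otherwise we have found, for some $i$, a vector $\bz_i$ (a point or a recession direction of $\calU(\bx_i)$) with $y_i\inangle{\bw,\bz_i}\le 0$; since every $\bw'\in\mathsf{Soln}_{S}^{\calU}$ satisfies $\inangle{y_i\bz_i,\bw'}=y_i\inangle{\bw',\bz_i}>0\ge\inangle{y_i\bz_i,\bw}$, the vector $\bg:=y_i\bz_i$ is a valid separating hyperplane normal to return to the ellipsoid method.

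For the radii, scale-invariance of $\mathsf{Soln}_{S}^{\calU}$ lets me search inside $\set{\bw:\norm{\bw}_{\infty}\le 2^{b}}$, since by hypothesis $\mathsf{Soln}_{S}^{\calU}$ contains a robust separator $\bw^{*}$ of bit complexity at most $b$; I would then argue that $\bw^{*}$ achieves margin $\gamma:=\min_{i}\min_{\bz\in\calU(\bx_i)}y_i\inangle{\bw^{*},\bz}\ge 2^{-\mathsf{poly}(d,b)}$ (using the bit-complexity bound on $\bw^{*}$ and on the worst-case perturbations), and a routine perturbation argument then gives that every $\bw$ with $\norm{\bw-\bw^{*}}\le\gamma\cdot 2^{-\mathsf{poly}(b)}$ still lies in $\mathsf{Soln}_{S}^{\calU}$, so we may take $r=2^{-\mathsf{poly}(d,b)}$ and $R=2^{\mathsf{poly}(b)}$. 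Putting it together: the ellipsoid method runs for $\mathsf{poly}(d,b)$ iterations, each invoking the separation oracle above, i.e.\ $m$ inner optimizations costing $\mathsf{poly}(d,b)$ calls to $\mathsf{SEP}_{\calU}$ each; the total running time is $\mathsf{poly}(m,d,b)$, and the returned point lies in the interior of $\mathsf{Soln}_{S}^{\calU}$. The non-homogeneous case follows by the usual lifting $\bx\mapsto(\bx,1)$.

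The step I expect to be the main obstacle is the quantitative bookkeeping around precision: the inner optimization calls are only approximate, so their accuracy must be set below the margin $\gamma$ so that an approximate minimizer of value $\le 0$ still certifies a genuine constraint violation and yields a genuinely separating $\bg$, and one must verify from the single assumption ``bit complexity $\le b$'' that $\log R$ and $\log(1/\gamma)$ are both $\mathsf{poly}(d,b)$. A secondary subtlety is that $\calU(\bx_i)$ may be unbounded (so the inner LP can be unbounded below, and $\mathsf{Soln}_{S}^{\calU}$ can even fail to be full-dimensional when some $\calU(\bx_i)$ has a lineality space); this is absorbed by the recession-direction branch above together with first identifying, again via the oracle, the affine span of $\mathsf{Soln}_{S}^{\calU}$ and running the ellipsoid method inside it, and in the bounded-perturbation regime covering $\ell_p$ adversaries neither issue arises.
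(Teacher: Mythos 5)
Your proposal is correct and follows essentially the same route as the paper: an outer ellipsoid over $\bw$-space whose separation oracle is implemented by per-example robust certification, itself carried out by an inner ellipsoid in perturbation space using $\mathsf{SEP}_{\calU}$, with a violating perturbation $\bz_i$ converted into the separating hyperplane $\pm y_i\bz_i$ exactly as you describe (the paper returns $-y_i\bz_i$, matching your vector up to sign convention). The only noteworthy difference is that the paper phrases the inner step as a feasibility problem for the convex set $\calU(\bx_i)\cap\set{\bz: y_i\inangle{\bw,\bz}\leq 0}$ (Lemma~\ref{lem:cert}) rather than as approximately minimizing $\bz\mapsto y_i\inangle{\bw,\bz}$ over $\calU(\bx_i)$, which sidesteps the approximate-optimization accuracy and recession-direction bookkeeping you identify as your main obstacle, while the margin/radius/bit-complexity issues you raise are handled in the paper only implicitly through the bit-complexity parameter $b$ in Lemma~\ref{lem:ellipsoid}.
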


Note that the polynomial dependence on $b$ in the runtime is unavoidable even in standard non-robust ERM for halfspaces, unless we can solve linear programs in strongly polynomial time, which is currently an open problem.

Theorem~\ref{thm:sufficient} implies that for a broad family of perturbation sets $\calU$, halfspaces $\calH$ are efficiently robustly PAC learnable with respect to $\calU$ in the realizable setting, as we show in the following corollary:

\begin{cor}
\label{cor:learn-realizable}
Let $\calU:\calX\mapsto2^{\calX}$ be an adversary such that $\calU(\bx)=\bx+\calB$ where $\calB$ is nonempty, closed, convex, and origin-symmetric. Then, given access to an efficient separation oracle $\mathsf{SEP}_\calU$ that runs in time $\mathsf{poly}(d,b)$, $\calH$ is robustly PAC learnable w.r.t. $\calU$ in the realizable setting in time ${\sf poly}(d,b,1/\epsilon,\log(1/\delta))$.
\end{cor}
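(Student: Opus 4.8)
The plan is to assemble the claimed learner from two already-established pieces: the $\RERM_\calU$ solver of Theorem~\ref{thm:sufficient} and the uniform-convergence bound \eqref{eqn:uniform_robust} for the robust loss class. Concretely, the learning algorithm $\calA$ will draw $m=\calO\!\big(\tfrac{d+\log(1/\delta)}{\epsilon^2}\big)$ i.i.d.\ examples $S\sim\calD^m$, run the procedure of Theorem~\ref{thm:sufficient} on $S$ using the supplied oracle $\mathsf{SEP}_\calU$ to obtain some $\bw\in\mathsf{Soln}_S^\calU$, and output the halfspace $h_\bw$.

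First I would verify that Theorem~\ref{thm:sufficient} applies to $S$. Since $\calB$ is convex, $\calU(\bx)=\bx+\calB$ is convex for every $\bx$, so $\calU$ is a convex adversary. In the realizable setting there is $\bw^*$ with $\Risk_\calU(h_{\bw^*};\calD)=0$; since $h_{\bw^*}(\bz)=y$ forces $\inangle{\bw^*,\bz}\neq 0$ and hence $y\inangle{\bw^*,\bz}>0$, the point $\bw^*$ lies in $\mathsf{Soln}_S^\calU$ with probability $1$ over the draw of $S$, so $\mathsf{Soln}_S^\calU\neq\emptyset$ and the solver of Theorem~\ref{thm:sufficient} returns some $\bw\in\mathsf{Soln}_S^\calU$ in $\mathsf{poly}(m,d,b)$ time. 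By definition of $\mathsf{Soln}_S^\calU$ we then have $y\inangle{\bw,\bz}>0$ for every $(\bx,y)\in S$ and every $\bz\in\calU(\bx)$, i.e.\ $\Risk_\calU(h_\bw;S)=0$. Next I would invoke generalization: because $\calB$ is nonempty, closed, convex, and origin-symmetric, the result of \cite{NIPS2018_7307} gives $\vc(\calL_\calH^\calU)\le d+1$, so for our choice of $m$ the bound \eqref{eqn:uniform_robust} holds, and with probability at least $1-\delta$ over $S$ we get $\Risk_\calU(h_\bw;\calD)\le\Risk_\calU(h_\bw;S)+\epsilon=\epsilon$. For the running time, sampling costs $\mathsf{poly}(m)$, the solver costs $\mathsf{poly}(m,d,b)$, and $m=\mathsf{poly}(d,1/\epsilon,\log(1/\delta))$, so the total is $\mathsf{poly}(d,b,1/\epsilon,\log(1/\delta))$.

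I do not expect a serious obstacle here beyond careful bookkeeping. One must track the bit-complexity parameter $b$ of Theorem~\ref{thm:sufficient} — it bounds the bit complexity of the sampled points and of some valid solution (e.g.\ $\bw^*$) — and one must justify that demanding \emph{exactly} zero robust empirical risk (rather than merely small) is legitimate, which is precisely where realizability together with the strict inequality in the definition of $\mathsf{Soln}_S^\calU$ is used. Modulo this, the corollary is a direct composition of Theorem~\ref{thm:sufficient} with the sample-complexity fact recorded around \eqref{eqn:uniform_robust}.
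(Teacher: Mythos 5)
Your proposal is correct and matches the paper's argument: the paper proves the corollary by exactly this composition of the $\RERM_\calU$ solver from Theorem~\ref{thm:sufficient} with the uniform convergence guarantee of Equation~\eqref{eqn:uniform_robust} (via the VC bound for origin-symmetric convex perturbation sets). Your additional bookkeeping about nonemptiness of $\mathsf{Soln}_S^\calU$ and the bit-complexity parameter $b$ only spells out what the paper leaves implicit.
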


\begin{proof}
This follows from the uniform convergence guarantee for the robust risk of halfspaces (see Equation~\eqref{eqn:uniform_robust}) and Theorem~\ref{thm:sufficient}. 
\end{proof}

This covers many types of perturbation sets that are considered in practice. For example, $\calU$ could be perturbations of distance at most $\gamma$ w.r.t. some norm $\norm{\cdot}$, such as the $\ell_\infty$ norm considered in many applications: $\calU(\bx)=\{\bz\in \calX: \norm{\bx-\bz}_\infty \leq \gamma\}$. In addition, Theorem~\ref{thm:sufficient} also implies that we can solve the $\RERM$ problem for other natural perturbation sets such as translations and rotations in images (see,  e.g.,~\cite{pmlr-v97-engstrom19a}), and perhaps mixtures of perturbations of different types (see,  e.g.,~\cite{DBLP:journals/corr/abs-1908-08016}), as long we have access to efficient separation oracles for these sets.

\paragraph{Benefits of handling general perturbation sets $\calU$:} One important implication of Theorem~\ref{thm:sufficient} that highlights the importance of having a treatment that considers general perturbation sets (and not just $\ell_p$ perturbations for example) is the following: for any efficiently computable feature map $\varphi: \bbR^r \to \bbR^{d}$, we can efficiently solve the robust empirical risk problem over the induced halfspaces $\calH_{\varphi}=\set{\bx \mapsto \sign(\inangle{\bw, \varphi(\bx)}): \bw \in \bbR^{d}}$, as long as we have access to an efficient separation oracle for the image of the perturbations $\varphi(\calU(\bx))$. Observe that in general $\varphi(\calU(\bx))$ maybe non-convex and complicated even if $\calU(\bx)$ is convex, however Observation~\ref{obs:convex} combined with the realizability assumption imply that it suffices to have an efficient separation oracle for the convex-hull $\mathsf{conv}(\varphi(\calU(\bx)))$.

Before we proceed with the proof of Theorem~\ref{thm:sufficient}, we state the following requirements and guarantees for the Ellipsoid method which will be useful for us in the remainder of the section:

\begin{lem} [see, e.g., Theorem 2.4 in \cite{bubeck2015convex}]
\label{lem:ellipsoid}
Let $K\subseteq \bbR^d$ be a convex set, and $\mathsf{SEP}_K$ a separation oracle for $K$. Then, the Ellipsoid method using $\calO(d^2b)$ oracle queries to $\mathsf{SEP}_{K}$, will find a $\bw\in K$, or assert that $K$ is empty. Furthermore, the total runtime is $\calO(d^4b)$. 
\end{lem}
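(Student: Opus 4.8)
This is the standard feasibility guarantee for the Ellipsoid method, and I would prove it along the classical lines (cf.\ \cite{bubeck2015convex}); throughout, $b$ is read as an encoding-length parameter, i.e.\ a value for which $K$ is contained in the origin-centered ball $B(0,2^{b})$ and, whenever $K\neq\emptyset$, $K$ contains some Euclidean ball of radius at least $2^{-b}$. The plan is to maintain a sequence of ellipsoids, each provably containing $K$, whose volumes contract geometrically, and to halt once the volume has dropped below that of the smallest possible nonempty $K$.

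Concretely, I would initialize $E_0=B(0,2^{b})\supseteq K$ and iterate as follows. Given the current ellipsoid $E_t$ with center $\bc_t$, call $\mathsf{SEP}_K(\bc_t)$; if it certifies $\bc_t\in K$, output $\bc_t$ and stop. Otherwise it returns a separating direction $\bg_t$ with $\inangle{\bg_t,\bx}\le\inangle{\bg_t,\bc_t}$ for every $\bx\in K$, so $K$ is contained in the half-ellipsoid $E_t\cap\{\bx:\inangle{\bg_t,\bx-\bc_t}\le 0\}$; I then let $E_{t+1}$ be the minimum-volume (L\"owner--John) ellipsoid enclosing that half-ellipsoid, computed by the closed-form rank-one update (shift the center by $\Theta(1/d)$ along $-M_t\bg_t$ and rescale/shear the shape matrix $M_t$ by a factor $1+\Theta(1/d)$), which preserves the invariant $K\subseteq E_{t+1}$ at $\calO(d^2)$ arithmetic cost. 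The quantitative heart of the argument is the volume contraction $\mathrm{vol}(E_{t+1})\le e^{-1/(2(d+1))}\,\mathrm{vol}(E_t)$: after $T=\calO\!\bigl(d\log(\mathrm{vol}(E_0)/\mathrm{vol}(B(0,2^{-b})))\bigr)=\calO(d^2 b)$ iterations, $\mathrm{vol}(E_T)<\mathrm{vol}(B(0,2^{-b}))$, and since $K\subseteq E_T$ this forces $K$ to contain no ball of radius $2^{-b}$, hence $K=\emptyset$ by the promise, so the method correctly reports emptiness. Overall this uses $\calO(d^2 b)$ oracle queries and, with the $\calO(d^2)$-time updates, $\calO(d^4 b)$ total running time.

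The step that genuinely needs care is numerical: the shape matrices and the square root in the update are irrational, so one rounds them and must check that carrying $\calO(b)$ bits of precision — equivalently, inflating each $E_{t+1}$ by a $(1+2^{-\Theta(b)})$ factor after the cut — keeps the per-iteration volume ratio bounded by a fixed constant $<1$, so the iteration count grows by only a constant factor while every arithmetic operation stays $\calO(1)$ on $\calO(b)$-bit numbers. A second subtlety, which concerns how we will invoke the lemma rather than its proof, is that the emptiness verdict is only valid under the inscribed-ball promise, i.e.\ a nonempty $K$ must be full-dimensional; in our applications (to $\mathsf{Soln}_S^{\calU}$ and to the per-example subproblems over $\calU(\bx_i)$) this is arranged by relaxing the open feasible region to one defined with a strictly positive margin $2^{-\mathrm{poly}(b)}$. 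I would handle both points exactly as in the standard references and simply cite \cite{bubeck2015convex} for the detailed bookkeeping.
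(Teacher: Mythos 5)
Your argument is exactly the standard volume-contraction analysis of the Ellipsoid method (initial ball of radius $2^{b}$, L\"owner--John half-ellipsoid updates, $e^{-1/(2(d+1))}$ shrinkage, inscribed-ball promise for the emptiness verdict), which is precisely the result the paper imports by citing Theorem 2.4 of \cite{bubeck2015convex} without reproving it. The proposal is correct, and your added remarks on finite precision and on ensuring full-dimensionality of the open feasible sets in the applications are the right caveats for invoking the lemma.
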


The proof of Theorem~\ref{thm:sufficient} relies on two key lemmas. First, we show that efficient robust certification yields an efficient solver for the $\RERM$ problem. Given a halfspace $\bw \in \bbR^d$ and an example $(\bx,y)\in \bbR^d\times \calY$, efficient robust certification means that there is an algorithm that can {\em efficiently} either: (a) assert that $\bw$ is robust on $\calU(\bx)$, i.e.\ $\forall \bz \in\calU(\bx), y\inangle{\bw,\bz} > 0$, or (b) return a perturbation $\bz \in \calU(\bx)$ such that $y\inangle{\bw,\bz} \leq 0$.

\begin{lem} 
\label{lem:rerm}
Let $\mathsf{CERT}_{\calU}(\bw,(\bx,y))$ be a procedure that either: (a) Asserts that $\bw$ is robust on $\calU(\bx)$, i.e., $\forall \bz \in\calU(\bx), y\inangle{\bw,\bz} > 0$, or (b) Finds a perturbation $\bz \in \calU(\bx)$ such that $y\inangle{\bw,\bz} \leq 0$. If $\mathsf{CERT}_{\calU}(\bw,(\bx,\by))$ can be solved in $\mathsf{poly}(d,b)$ time, then there is an algorithm that finds $\bw \in \mathsf{Soln}_{S}^{\calU}$ in $\mathsf{poly}(m,d,b)$ time.
\end{lem}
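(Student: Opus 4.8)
\textbf{Proof plan for Lemma~\ref{lem:rerm}.}
The plan is to run the Ellipsoid method of Lemma~\ref{lem:ellipsoid} on the convex feasible region
$K := \mathsf{Soln}_{S}^{\calU} = \set{\bw\in\bbR^d : \forall (\bx,y)\in S,\ \forall \bz\in\calU(\bx),\ y\inangle{\bw,\bz} > 0}$,
using the certification procedure $\mathsf{CERT}_{\calU}$ to implement the separation oracle it requires. First I would note that $K$ is convex: for each $(\bx_i,y_i)$ the constraint ``$y_i\inangle{\bw,\bz} > 0$ for all $\bz\in\calU(\bx_i)$'' is an intersection of halfspaces in the variable $\bw$, and an intersection of convex sets is convex; moreover $K\neq\emptyset$ by the realizability (robust separability) assumption. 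Since $K$ is a cone (scale-invariant), I would first intersect it with the box $\set{\bw:\norm{\bw}_\infty\leq 2^{b}}$, whose separation oracle is trivial, to obtain a bounded instance; by standard arguments the bit-complexity bound $b$ from the theorem statement guarantees that this bounded body is nonempty and ``well-conditioned'' (contains an inscribed ball of radius $2^{-\mathsf{poly}(b)}$ and is contained in a ball of radius $2^{\mathsf{poly}(b)}$), so Lemma~\ref{lem:ellipsoid} applies and terminates after $\mathsf{poly}(d,b)$ oracle calls, and any point it returns lies in $\mathsf{Soln}_{S}^{\calU}$.

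The heart of the argument is turning $\mathsf{CERT}_{\calU}$ into a separation oracle for $K$. On a query point $\bw\in\bbR^d$, call $\mathsf{CERT}_{\calU}(\bw,(\bx_i,y_i))$ for every $i=1,\dots,m$. If all $m$ calls return assertion (a), then $\bw$ is robust on each $\calU(\bx_i)$, i.e.\ $\bw\in\mathsf{Soln}_{S}^{\calU}$, and the algorithm halts and outputs $\bw$. Otherwise some call returns assertion (b): a perturbation $\bz\in\calU(\bx_i)$ with $y_i\inangle{\bw,\bz}\leq 0$. I claim that $\bv := -\,y_i\bz$ is a valid separating hyperplane at $\bw$ in the sense of Definition~\ref{def:sep}. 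Indeed $\inangle{\bv,\bw} = -y_i\inangle{\bz,\bw}\geq 0$, while for every feasible $\bw'\in K$ we have $\inangle{\bv,\bw'} = -y_i\inangle{\bz,\bw'} < 0$ because $\bz\in\calU(\bx_i)$ and $\bw'$ is feasible; hence $\inangle{\bv,\bw'}\leq\inangle{\bv,\bw}$ for all $\bw'\in K$, so $\bv$ separates $\bw$ from $K$. Each separation query thus costs $m$ calls to $\mathsf{CERT}_{\calU}$ at $\mathsf{poly}(d,b)$ each, i.e.\ $\mathsf{poly}(m,d,b)$ per query; together with the $\mathsf{poly}(d,b)$ query bound of Lemma~\ref{lem:ellipsoid}, the total running time is $\mathsf{poly}(m,d,b)$, as claimed.

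The main obstacle is not this reduction, which is short, but the technical bookkeeping needed to legitimately invoke the Ellipsoid guarantee, since $\mathsf{Soln}_{S}^{\calU}$ is defined by \emph{strict} inequalities and is therefore open and scale-invariant. The standard remedy is the one sketched above: pass to a bounded box, and argue from the bit complexity $b$ of a feasible solution and of the data that the (suitably normalized) robust margin is bounded below by $2^{-\mathsf{poly}(b)}$, so the relaxed feasible body contains a ball large enough for the Ellipsoid method; one then also has to check that a point the method returns inside the relaxed body is genuinely strictly feasible for the original constraints (a slight shrink/rounding step). A minor point worth recording is that the cut $\bv$ produced above is only a non-strict (central) cut, $\inangle{\bv,\bw'}\leq\inangle{\bv,\bw}$ for $\bw'\in K$, but this is exactly the interface consumed by the central-cut Ellipsoid method of Lemma~\ref{lem:ellipsoid}, so no deep separation is needed.
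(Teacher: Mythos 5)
Your proposal is correct and follows essentially the same route as the paper's proof: implement a separation oracle for $\mathsf{Soln}_{S}^{\calU}$ by calling $\mathsf{CERT}_{\calU}$ on each of the $m$ examples and returning $-y_i\bz_i$ as the cut when a violating perturbation is found, then run the Ellipsoid method of Lemma~\ref{lem:ellipsoid}. The additional bookkeeping you flag (bounding the search region, handling the strict/open, scale-invariant feasible set via the bit-complexity bound $b$) is left implicit in the paper's argument, and your treatment of it is a reasonable filling-in rather than a different approach.
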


\begin{proof}
Observe that $\mathsf{Soln}_{S}^{\calU}$ is a convex set since 
\begin{align*}
    \bw_1,\bw_2 \in \mathsf{Soln}_{S}^{\calU} &\Rightarrow \forall (\bx,y)\in S, \forall z \in \calU(\bx), y\inangle{\bw_1,\bz} > 0 \text{ and } y\inangle{\bw_2,\bz} > 0 \\
    &\Rightarrow \forall \alpha\in[0,1], \forall (\bx,y)\in S, \forall \bz \in \calU(\bx), y\inangle{\alpha \bw_1 + (1-\alpha) \bw_2,\bz} > 0\\
    &\Rightarrow \forall \alpha\in[0,1], \alpha \bw_1 + (1-\alpha) \bw_2 \in \mathsf{Soln}_{S}^{\calU}.
\end{align*}

Our goal is to find a $\bw \in \mathsf{Soln}_{S}^{\calU}$. Let $\mathsf{CERT}_{\calU}(\bw,(\bx,y))$ be an efficient robust certifier that runs in $\mathsf{poly}(d,b)$ time. We will use $\mathsf{CERT}_{\calU}(\bw,(\bx,y))$ to implement a separation oracle for $\mathsf{Soln}_{S}^{\calU}$ denoted $\mathsf{SEP}_{\mathsf{Soln}_{S}^{\calU}}$. Given a halfspace $\bw\in\bbR^d$, we simply check if $\bw$ is robustly correct on all datapoints by running $\mathsf{CERT}_{\calU}(\bw,(\bx_i,y_i))$ on each $(\bx_i,y_i)\in S$. If there is a point $(\bx_i,y_i)\in S$ where $\bw$ is not robustly correct, then we get a perturbation $\bz_i\in \calU(\bx_i)$ where $y_i\inangle{\bw,\bz_i} \leq 0$, and we return $-y_i\bz_i$ as a separating hyperplane. Otherwise, we know that $\bw$ is robustly correct on all datapoints, and we just assert that $\bw \in \mathsf{Soln}_{S}^{\calU}$.

Once we have a separation oracle $\mathsf{SEP}_{\mathsf{Soln}_{S}^{\calU}}$, we can use the Ellipsoid method (see Lemma~\ref{lem:ellipsoid}) to solve the $\RERM_\calU(S)$ problem. More specifically, with a query complexity of $\calO(d^2b)$ to  $\mathsf{SEP}_{\mathsf{Soln}_{S}^{\calU}}$ and overall runtime of ${\sf{poly}}(m, d, b)$ (this depends on runtime of $\mathsf{CERT}_{\calU}(\bw,(\bx,y))$), the Ellipsoid method will return a $\bw \in \mathsf{Soln}_{S}^{\calU}$.
\end{proof}

Next, we show that we can do efficient robust certification when given access to an efficient separation oracle for $\calU$, $\mathsf{SEP}_{\calU}$. 

\begin{lem}
\label{lem:cert}
If we have an efficient separation oracle $\mathsf{SEP}_{\calU}$ that runs in $\mathsf{poly}(d,b)$ time. Then, we can efficiently solve $\mathsf{CERT}_{\calU}(\bw,(\bx,y))$ in $\mathsf{poly}(d,b)$ time. 
\end{lem}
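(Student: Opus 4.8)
The plan is to view $\mathsf{CERT}_{\calU}(\bw,(\bx,y))$ as a linear feasibility question over the convex body $\calU(\bx)$ and to answer it with the Ellipsoid method, using $\mathsf{SEP}_{\calU}$ as the separation oracle. Fix the query and write $\bc = y\bw$. Then alternative (a) holds exactly when $\min_{\bz\in\calU(\bx)}\inangle{\bc,\bz}>0$, and alternative (b) holds exactly when $\min_{\bz\in\calU(\bx)}\inangle{\bc,\bz}\le 0$, in which case we must also exhibit a witness. So, letting $H=\set{\bz:\inangle{\bc,\bz}\le 0}$, the task reduces to deciding whether $K' \defeq \calU(\bx)\cap H$ is empty (output (a)) or nonempty (output (b) with any point of $K'$).

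First I would implement a separation oracle $\mathsf{SEP}_{K'}$ by combining $\mathsf{SEP}_{\calU}$ with the trivial oracle for the halfspace $H$: on a query point $\bz$, if $\inangle{\bc,\bz}>0$ return $\bc$ as the separating hyperplane (valid since every $\bz'\in K'\subseteq H$ has $\inangle{\bc,\bz'}\le 0<\inangle{\bc,\bz}$); otherwise call $\mathsf{SEP}_{\calU}(\bx,\bz)$ and relay its answer — either it certifies $\bz\in\calU(\bx)$, hence $\bz\in K'$, or it returns a hyperplane separating $\bz$ from $\calU(\bx)\supseteq K'$, which separates $\bz$ from $K'$ as well. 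Each call to $\mathsf{SEP}_{K'}$ costs one call to $\mathsf{SEP}_{\calU}$ plus $\calO(d)$ arithmetic, i.e.\ $\mathsf{poly}(d,b)$ time. Running the Ellipsoid method of Lemma~\ref{lem:ellipsoid} on $K'$ then uses $\calO(d^2b)$ such calls and $\mathsf{poly}(d,b)$ time in total, and returns either a point $\bz\in K'$ — giving alternative (b) — or the assertion $K'=\emptyset$ — giving alternative (a). Correctness is immediate from the definition of $K'$, and combined with Lemma~\ref{lem:rerm} this yields Theorem~\ref{thm:sufficient}.

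The main obstacle is the degenerate case in which $K'$ is nonempty but not full-dimensional, i.e.\ $\min_{\bz\in\calU(\bx)}\inangle{\bc,\bz}=0$ and is attained on a proper face of $\calU(\bx)$; then the plain feasibility form of the Ellipsoid method need not distinguish $K'$ from $\emptyset$. I would handle this by instead running the Ellipsoid method on the slightly enlarged set $K'_\delta \defeq \calU(\bx)\cap\set{\bz:\inangle{\bc,\bz}\le\delta}$ for a suitably chosen $\delta = 2^{-\mathsf{poly}(d,b)}$: using the bit-complexity bound $b$ on $\bx$ (and the fact that the queries $\bw$ produced by the outer Ellipsoid run have bit complexity $\mathsf{poly}(d,b)$), one argues that $\min_{\bz\in\calU(\bx)}\inangle{\bc,\bz}$ cannot lie in $(0,\delta]$, so that membership of a point in $K'_\delta$ already certifies that the minimum is $\le 0$; a point of $K'_\delta$ returned by the Ellipsoid method can then be projected onto $\calU(\bx)$ (again via $\mathsf{SEP}_{\calU}$, running Ellipsoid on the nonempty full-dimensional slabs that shrink toward $\calU(\bx)\cap H$) to extract an exact witness $\bz\in\calU(\bx)$ with $\inangle{\bc,\bz}\le 0$. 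For the perturbation sets of primary interest — $\ell_p$-balls and the like — this degeneracy is in any case transparent, since there $\min_{\bz\in\calU(\bx)}\inangle{\bc,\bz}$ and its minimizer are available in closed form and the test $\inangle{\bc,\bz}\le 0$ can be carried out exactly by arithmetic.
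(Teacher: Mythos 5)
Your proof is correct and follows essentially the same route as the paper: both reduce $\mathsf{CERT}_{\calU}(\bw,(\bx,y))$ to deciding emptiness of the convex set $\calU(\bx)\cap\set{\bz: y\inangle{\bw,\bz}\leq 0}$, build its separation oracle by combining $\mathsf{SEP}_{\calU}$ with the trivial oracle for the mislabeling halfspace (returning $y\bw$ or the relayed hyperplane), and run the Ellipsoid method of Lemma~\ref{lem:ellipsoid}. Your extra paragraph on the degenerate, lower-dimensional case is additional care the paper leaves implicit in its invocation of the Ellipsoid guarantee with bit-complexity bound $b$, but it does not change the argument.
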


\begin{proof}
Given a halfspace $\bw\in \bbR^d$ and $(\bx,y)\in \bbR^d\times\calY$, we want to either: (a) assert that $\bw$ is robust on $\calU(\bx)$, i.e.\ $\forall \bz \in\calU(\bx), y\inangle{\bw,\bz} > 0$, or (b) find a perturbation $\bz \in \calU(\bx)$ such that $y\inangle{\bw,\bz} \leq 0$. Let $M(\bw,y)=\{\bz' \in \calX: y\inangle{\bw,\bz'}\leq 0\}$ be the set of all points that $\bw$ mis-labels. Observe that by definition $M(\bw,y)$ is convex, and therefore $\calU(\bx)\cap M(\bw,y)$ is also convex. We argue that having an efficient separation oracle for $\calU(\bx)\cap M(\bw,y)$ suffices to solve our robust certification problem. Because if $\calU(\bx)\cap M(\bw,y)$ is not empty, then by definition, we can find a perturbation $\bz \in \calU(\bx)$ such that $y\inangle{\bw,\bz} \leq 0$ with a separation oracle $\mathsf{SEP}_{\calU(\bx)\cap M(\bw,y)}$ and the Ellipsoid method (see Lemma~\ref{lem:ellipsoid}). If $\calU(\bx)\cap M(\bw,y)$ is empty, then by definition, $\bw$ is robustly correct on $\calU(\bx)$, and the Ellipsoid method will terminate and assert that $\calU(\bx)\cap M(\bw,y)$ is empty.

Thus, it remains to implement $\mathsf{SEP}_{\calU(\bx)\cap M(\bw,y)}$. Given a point $\bz\in\bbR^d$, we simply ask the separation oracle for $\calU(\bx)$ by calling $\mathsf{SEP}_{\calU}(\bx, \bz)$ and the separation oracle for $M(\bw,y)$ by checking if $y\inangle{\bw,\bz}\leq 0$. If $\bz \notin \calU(\bx)$ the we get a separating hyperplane $\bc$ from $\mathsf{SEP}_{\calU}$ and we can use it separate $\bz$ from $\calU(\bx)\cap M(\bw,y)$. Similarly, if $\bz\notin M(\bw,y)$, by definition, $\inangle{y\bw, \bz} > 0$ and so we can use $y\bw$ as a separating hyperplane to separate $\bz$ from $\calU(\bx)\cap M(\bw,y)$. The overall runtime of this separation oracle is $\mathsf{poly}(d,b)$, and so we can efficiently solve $\mathsf{CERT}_{\calU}(\bw,(\bx,y))$ in $\mathsf{poly}(d,b)$ time using the Ellipsoid method (Lemma~\ref{lem:ellipsoid}).
\end{proof}

We are now ready to proceed with the proof of Theorem~\ref{thm:sufficient}. 
\begin{proof}[Proof of Theorem~\ref{thm:sufficient}]

We want to efficiently solve $\RERM_{\calU}(S)$. Given that we have a separation oracle for $\calU$, $\mathsf{SEP}_{\calU}$ that runs in $\mathsf{poly}(d,b)$. By Lemma~\ref{lem:cert}, we get an efficient robust certification procedure $\mathsf{CERT}_{\calU}(\bw,(\bx,y))$. Then, by Lemma~\ref{lem:rerm}, we get an efficient solver for $\RERM_\calU$. In particular, the runtime complexity is $\mathsf{poly}(m, d,b)$.
\end{proof}

\subsection{An efficient approximate separation oracle for $\calU$ is necessary for computing the robust loss}

Our efficient algorithm for $\RERM_{\calU}$ requires a separation oracle for $\mathcal{U}$. We now show that even efficiently computing the robust loss of a halfspace $(\bw,b_0) \in \bbR^d \times \bbR$ on an example $(\bx,y)\in \bbR^d\times \calY$ requires an efficient {\em approximate} separation oracle for $\mathcal{U}$. 

\begin{theorem}
\label{thm:necessary}
Given a halfspace $\bw \in \bbR^d$ and an example $(\bx,y)\in \bbR^d\times \calY$, let $\mathsf{EVAL}_{\calU}((\bw,b_0),(\bx,y))$ be a procedure that computes the robust loss $\sup_{\bz\in\calU(\bx)} \ind[y (\inangle{\bw,\bz} +b_0) \leq 0]$ in $\mathsf{poly}(d,b)$ time, then for any $\gamma > 0$, we can implement an efficient $\gamma$-approximate separation oracle $\mathsf{SEP}^{\gamma}_{\calU}(\bx,\bz)$ in $\mathsf{poly}(d,b,\log(1/\gamma), \log(R))$ time, where $\calU(\bx)\subseteq B(0,R)$.
\end{theorem}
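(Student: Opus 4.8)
The plan is to turn $\mathsf{EVAL}_{\calU}$ into an approximate separation oracle through a short chain of reductions passing through the support function of $\calU(\bx)$ and its polar. Fix the input $\bx$ and write $K := \calU(\bx) \subseteq B(0,R)$; recall $\bx \in K$ by the definition of an adversary, so we have a cheap point inside $K$. The first observation is that $\mathsf{EVAL}_{\calU}$ is exactly a threshold oracle for the support function $h_K(\bw) := \sup_{\bz' \in K} \inangle{\bw, \bz'}$: taking label $y = -1$ and bias $b_0 = -t$ and unwinding the definition, $\mathsf{EVAL}_{\calU}((\bw, -t), (\bx, -1))$ equals $\ind[\exists\, \bz' \in K : \inangle{\bw, \bz'} \ge t]$, which (up to an immaterial boundary case) is $1$ iff $h_K(\bw) \ge t$. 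Since $|h_K(\bw)| \le R\norm{\bw}_2$, a binary search over $t$ --- one $\mathsf{EVAL}$ call per step --- computes $h_K(\bw)$ to any additive accuracy $\rho$ in $\calO(\log(R\norm{\bw}_2/\rho))$ calls; every normal $\bw$ we feed it will come out of an Ellipsoid run and thus have polynomially bounded bit complexity, so these costs stay under control.

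Next I replace $K$ by the fattened body $K' := K + \tfrac{\gamma}{2}B_2 = K^{+\gamma/2}$, which is \emph{well-bounded}: it contains the ball $B(\bx, \gamma/2)$ (because $\bx \in K$) and is contained in $B(0, R + \gamma/2)$, and its support function $h_{K'}(\bw) = h_K(\bw) + \tfrac{\gamma}{2}\norm{\bw}_2$ is still computable via Step~1. A quick check shows a $(\gamma/2)$-approximate separation oracle for $K'$ already yields a $\gamma$-approximate separation oracle for $K$: an assertion that $\bz \in (K')^{+\gamma/2} = K^{+\gamma}$ is correct, and a hyperplane $\bw$ valid against $(K')^{-\gamma/2}$ is valid against $K \subseteq (K')^{-\gamma/2}$, hence against $\calU(\bx)^{-\gamma}$, with slack $\gamma/2 \le \gamma$. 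So it suffices to build an approximate separation oracle for $K'$.

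The heart of the argument is to extract a \emph{separation} oracle for $K'$ from the \emph{value} oracle for $h_{K'}$, which I do via the polar of $K'$ about its interior point $\bx$, namely $P := \{\bw : \inangle{\bw, \bv - \bx} \le 1 \text{ for all } \bv \in K'\} = \{\bw : h_{K'}(\bw) - \inangle{\bw,\bx} \le 1\}$. Since $K'$ is well-bounded, $P$ is again a well-bounded convex body, and a value oracle for $h_{K'}$ is precisely a (weak) \emph{membership} oracle for $P$: on input $\bw$, compute $h_{K'}(\bw)$ to small accuracy and compare $h_{K'}(\bw) - \inangle{\bw,\bx}$ to $1$. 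Now invoke the classical polynomial-time equivalence between weak membership and weak separation for well-bounded convex bodies (the shallow-cut Ellipsoid method of Yudin and Nemirovski; see, e.g., \cite{bubeck2015convex} and the Gr\"otschel--Lov\'asz--Schrijver oracle theory) to get a weak separation oracle for $P$, each call costing $\mathsf{poly}(d, \log(R/\gamma))$ and making $\mathsf{poly}(\cdot)$ membership queries. Finally, to separate the query point $\bz$: since $\bz \in K'$ iff $\inangle{\bw, \bz - \bx} \le 1$ for all $\bw \in P$, run the Ellipsoid method (cf.\ Lemma~\ref{lem:ellipsoid}, weak version) on $\{\bw \in P : \inangle{\bw, \bz - \bx} \ge 1 + \rho'\}$ using the separation oracle for $P$ and the extra linear constraint. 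If it returns a point $\bw^\star$, then $\inangle{\bw^\star, \bz - \bx} > 1 \ge \inangle{\bw^\star, \bz' - \bx}$ for all $\bz' \in K' \supseteq K$, so $\bw^\star$ is a valid $\gamma$-separating hyperplane; if it reports the set empty, then by the bipolar theorem $\bz$ lies in a small neighborhood of $K'$, and with all accuracies $\rho, \rho'$ set to suitably small polynomials in $\gamma/(dR)$ we may soundly assert $\bz \in K^{+\gamma}$. Composing the reductions, every step runs in $\mathsf{poly}(d, b, \log(1/\gamma), \log R)$ time, as claimed.

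The step I expect to be the main obstacle is the third one: producing an actual separating hyperplane --- equivalently, a supporting point of $K'$, i.e., a subgradient of $h_{K'}$ --- from an oracle that returns only the \emph{value} of the support function. This is where the well-boundedness of $K'$ is essential, and it is exactly why we first fatten $K$ by $\tfrac{\gamma}{2}B_2$ and exploit the defining property $\bx \in \calU(\bx)$ to produce an interior point. Everything else --- the binary search of Step~1, the elementary set containments of Step~2, and the accuracy and bit-complexity bookkeeping needed to make all the $\mathsf{poly}$'s compose --- is routine.
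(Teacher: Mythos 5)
Your proposal is correct in substance and follows the same overall strategy as the paper's proof: dualize to the space of affine halfspaces that contain $\calU(\bx)$ on their positive side, use $\mathsf{EVAL}_\calU$ as a membership oracle for that dual body, convert membership to (approximate) separation via a known oracle reduction, and then run the Ellipsoid method on the dual body intersected with a linear constraint forcing the candidate hyperplane to put the query point $\bz$ on the far side by a margin of order $\gamma$ --- declaring $\bz$ close to $\calU(\bx)$ if that search comes up empty. The packaging differs: the paper keeps the pair $(\bw,b_0)$ and works directly with the cone $K=\set{(\bw,b_0):\inangle{\bw,\bz'}+b_0>0 \ \forall \bz'\in\calU(\bx)}$, so a single $\mathsf{EVAL}$ call per query gives membership, the membership-to-approximate-separation step is delegated to the Lee--Sidford--Vempala reduction, and the approximation slack is controlled by the Cauchy--Schwarz estimate with tolerance $\gamma/4R$; you instead extract the support function of $\calU(\bx)$ by binary search over the bias, fatten the body by $\tfrac{\gamma}{2}B_2$, and pass to its polar about the interior point $\bx$ before invoking the classical GLS/Yudin--Nemirovski membership-to-separation equivalence. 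Your route costs $O(\log(R/\gamma\rho))$ oracle calls per membership query instead of one, but it buys an explicit well-bounded body with a certified interior point (inner radius $\approx 1/R$, outer radius $\approx 1/\gamma$), which makes the hypotheses of the membership-to-separation reduction and the Ellipsoid termination argument transparent --- a point the paper handles only implicitly through the unbounded cone $K$ and the bit-complexity bound $b$. The quantitative bookkeeping you defer (translating value accuracy into weak membership tolerance for the polar, and lower-bounding the volume of the feasible set $\set{\bw\in P:\inangle{\bw,\bz-\bx}\geq 1+\rho'}$ when $\bz$ is $\gamma$-far, so that the weak Ellipsoid is guaranteed to find a point) is routine and no looser than the corresponding steps the paper leaves implicit, so I see no genuine gap.
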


\begin{proof}
Let $\gamma > 0$. We will describe how to implement a $\gamma$-approximate separation oracle for $\calU$ denoted $\mathsf{SEP}^{\gamma}_{\calU}(\bx,\bz)$. Fix the first argument to an arbitrary $\bx\in\calX$. Upon receiving a point $\bz\in\calX$ as input, the main strategy is to search for a halfspace $\bw\in \bbR^d$ that can label all of $\mathcal{U}(\bx)$ with $+1$, and label the point $\bz$ with $-1$. If $\bz\notin \mathcal{U}(\bx)$ then there is a halfspace $\bw$ that separates $\bz$ from $\calU(\bx)$ because $\calU(\bx)$ is convex, but this is impossible if $\bz\in \mathcal{U}(\bx)$. Since we are only concerned with implementing an {\em approximate} separation oracle, we will settle for a slight relaxation which is to either:
\begin{enumerate}
    \item[$\bullet$] assert that $\bz$ is $\gamma$-close to $\calU(\bx)$, i.e., $\bz \in B(\calU(\bx),\gamma)$, or
    \item[$\bullet$] return a separating hyperplane $\bw$ such that $\inangle{\bw, \bz'} \leq \inangle{\bw,\bz}$ for all $\bz'\in\calU(\bx)$. 
\end{enumerate}

Let $K=\set{(\bw,b_0): \forall \bz'\in \mathcal{U}(\bx), \inangle{\bw,\bz'} +b_0 > 0}$ denote the set of halfspaces that label all of $\calU(\bx)$ with $+1$. Since $\calU(\bx)$ is nonempty, it follows by definition that $K$ is nonempty. To evaluate membership in $K$, given a query $\bw_q, b_q$, we just make a call to $\mathsf{EVAL}_{\calU}((\bw_q,b_q),(\bx,+))$. Let ${\mathsf{MEM}}_{K}(\bw_q,b_q) = 1- \mathsf{EVAL}_{\calU}((\bw_q,b_q),(\bx,+))$. This can be efficiently computed in $\mathsf{poly}(d,b)$ time. Next, for any $\eta \in (0,0.5)$, we can get an $\eta$-approximate separation oracle for $K$ denoted $\mathsf{SEP}_{K}^{\eta}$ (see Definition~\ref{def:eta-sep}) using $\calO(d b \log{(d/\eta)})$ queries to the membership oracle ${\mathsf{MEM}}_{K}$ described above \cite{lee2018efficient}. When queried with a halfspace $\Tilde{\bw} = (\bw,b_0)$, $\mathsf{SEP}_{K}^{\eta}$ either:
\begin{enumerate}
    \item[$\bullet$] asserts that $\Tilde{\bw}\in K^{+\eta}$, or
    \item[$\bullet$] returns a separating hyperplane $\bc$ such that $\inangle{\bc,\Tilde{\bw}'}\leq \inangle{\bc,\Tilde{\bw}} + \eta$ for all halfspaces $\Tilde{\bw}'\in K^{-\eta}$.
\end{enumerate}

Observe that by definition, $K^{-\eta}\subseteq K \subseteq K^{+\eta}$. Furthermore, for any $\bw \in K^{+\eta}$, by definition, $\exists \Tilde{\bw}' \in K$ such that $\norm{\Tilde{\bw}-\Tilde{\bw}'}_2\leq \eta$. Since, for each $\bz'\in\calU(\bx)$, by definition of $K$, we have $\inangle{\Tilde{\bw}', (\bz',1)}=\inangle{\bw',\bz'}+b_0 > 0$, it follows by Cauchy-Schwarz inequality that: 
\begin{equation}
\label{eqn:proof-nec}
    \inparen{\forall \Tilde{\bw}\in K^{+\eta}}\inparen{\forall \bz'\in\calU(\bx)}: \inangle{\Tilde{\bw},(\bz',1)}=\inangle{\Tilde{\bw}-\Tilde{\bw}', \bz'} + \inangle{\Tilde{\bw}',(\bz',1)} > -\eta 2R.
\end{equation}

Let $\mathsf{SEP}_{K}^{\gamma/4R}$ be a $\frac{\gamma}{4R}$-approximate separation oracle for $K$. Observe that if the distance between $\bz$ and $\calU(\bx)$ is greater than $\gamma$, it follows that there is $(\bw,b_0)$ such that:
\[\inangle{\bw, \bz}+b_0 \leq -\gamma/2 \text{ and } \inangle{\bw,\bz'}+b_0 > 0 \inparen{\forall \bz'\in \calU(\bx)}.\]

By definition of $K$, this implies that $K\cap \set{(\bw,b_0) : \inangle{\bw,\bz}+b_0\leq -\gamma/2}$ is not empty, which implies that the intersection ${K}^{+\frac{\gamma}{4R}} \cap \set{(\bw,b_0) : \inangle{\bw,\bz}+b_0\leq -\gamma/2}$ is nonempty. We also have the contrapositive, which is, if the intersection ${K}^{+\frac{\gamma}{4R}} \cap \set{(\bw,b_0) : \inangle{\bw,\bz}+b_0\leq -\gamma/2}$ is empty, then we know that $\bz\in B(\calU(\bx),\gamma)$. To conclude the proof, we run the Ellipsoid method with the approximate separation oracle $\mathsf{SEP}_{K}^{\gamma/4R}$ to search over the restricted space $\set{(\bw,b_0) : \inangle{\bw,\bz}+b_0\leq -\gamma/2}$. Restricting the space is easily done because we will use the query point $\bz$ as the separating hyperplane. Either the Ellipsoid method will find $(\bw,b_0) \in {K}^{+\frac{\gamma}{4R}} \cap \set{(\bw,b_0) : \inangle{\bw,\bz}+b_0\leq -\gamma/2}$, in which case by Equation~\ref{eqn:proof-nec}, $(\bw,b_0)$ has the property that:
\[ \inangle{\bw, \bz} + b_0\leq -\frac{\gamma}{2} \text{ and } \inangle{\bw, \bz'} + b_0 > -\frac{\gamma}{2} \inparen{\forall \bz'\in \calU(\bx)},\]
and so we return $\bw$ as a separating hyperplane between $\bz$ and $\calU(\bx)$. If the Ellipsoid terminates without finding any such $(\bw,b_0)$, this implies that the intersection ${K}^{+\frac{\gamma}{4R}} \cap \set{(\bw,b_0) : \inangle{\bw,\bz}+b_0\leq -\gamma/2}$ is empty, and therefore, by the contrapositive above, we assert that $\bz \in B(\calU(\bx), \gamma)$.
\end{proof}
\section{Random Classification Noise}

In this section, we relax the realizability assumption to random classification noise~\cite{DBLP:journals/ml/AngluinL87}. We show that for any adversary $\calU$ that represents perturbations of bounded norm 
(i.e., $\calU(x)=x+\calB$, where $\calB=\set{\delta \in \bbR^d: {\norm{\delta}_{p}} \leq \gamma},p\in[1,\infty]$), the class of halfspaces $\calH$ is efficiently robustly PAC learnable with respect to $\calU$ in the random classification noise model.

\begin{theorem}
\label{cor:random_nois}
Let $\calU:\calX\mapsto2^{\calX}$ be an adversary such that $\calU(\bx)=\bx+\calB$ where $\calB=\set{\delta \in \bbR^d: {\norm{\delta}_{p}} \leq \gamma}$ and $p\in[1,\infty]$. Then, $\calH$ is robustly PAC learnable w.r.t $\calU$ under random classification noise in time ${\sf poly}(d, 1/\eps,1/\gamma, 1/(1-2\eta), \log(1/\delta))$.
\end{theorem}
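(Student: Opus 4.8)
\textbf{Proof plan for Theorem~\ref{cor:random_nois}.}

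\emph{Reduction to a margin problem.} The first step is to trade the $\ell_p$-adversary for a margin condition via Hölder duality: if $q$ is the conjugate exponent of $p$, then for every $\bw$ and every $(\bx,y)$ one has $\sup_{\bz\in\calU(\bx)}\ind[y\inangle{\bw,\bz}\le 0]=\ind[\,y\inangle{\bw,\bx}\le\gamma\norm{\bw}_q\,]$. Consequently $\Risk_\calU(h^*;\calD_{h^*})=0$ forces $h^*(\bx)\inangle{\bw^*,\bx}>\gamma\norm{\bw^*}_q$ on the support of $\calD_\bx$, and after rescaling so that $\norm{\bw^*}_q=1$ we get $h^*(\bx)\inangle{\bw^*,\bx}>\gamma$ a.s. A one-line computation then gives $\Risk_\calU(h^*;\calD)=\eta$ for the noisy distribution, so $\eta+\eps$ is precisely the robust risk of the planted halfspace; and for any $\bw$ with $\norm{\bw}_q\le 1$ we have $\Risk_\calU(h_\bw;\calD)=\Pr_{(\bx,y)\sim\calD}[y\inangle{\bw,\bx}\le\gamma\norm{\bw}_q]\le\Pr_{(\bx,y)\sim\calD}[y\inangle{\bw,\bx}\le\gamma]$. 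So it suffices to output $\bw$ with $\norm{\bw}_q\le1$ and small \emph{noisy $\gamma$-margin disagreement} $\Pr_{(\bx,y)\sim\calD}[y\inangle{\bw,\bx}\le\gamma]\le\eta+\eps$. Replacing each $\bx$ by $\bx/\norm{\bx}_2$ changes neither $h^*$'s predictions nor this quantity (up to rescaling $\gamma$ by the data radius), so I may assume $\norm{\bx}_2\le 1$, hence $\norm{\bx}_p\le\mathrm{poly}(d)$.

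\emph{The convex surrogate.} I will minimize, over the convex body $\calK=\set{\bw:\norm{\bw}_q\le1}$ (which contains $\bw^*$), the LeakyReLU surrogate of the robust loss: set $\relu(t):=\max\{(1-\lambda)(\gamma-t),\,\lambda(\gamma-t)\}$ for a leakiness parameter $\lambda\in(0,1/2)$ and define $\calL_\lambda(\bw)=\Ex_{(\bx,y)\sim\calD}[\relu(y\inangle{\bw,\bx})]$, which penalizes margin violations with slope $1-\lambda$ and rewards large margins with slope $\lambda$, and is convex in $\bw$. Fresh noisy examples at each step make the empirical subgradients unbiased for $\nabla\calL_\lambda(\bw)=-\Ex[(\lambda+(1-2\lambda)\ind[y\inangle{\bw,\bx}<\gamma])\,y\bx]$ and bounded by $\mathrm{poly}(d)$, so projected/mirror gradient descent over $\calK$ with the appropriate $\ell_q$--$\ell_p$ geometry produces, in $\mathrm{poly}(d,1/\alpha)$ steps, a point $\bw_\lambda\in\calK$ with $\calL_\lambda(\bw_\lambda)\le\min_{\calK}\calL_\lambda+\alpha$.

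\emph{The structural lemma (the crux).} The heart of the proof is the claim that \emph{if $\lambda\ge\eta$ and $\bw\in\calK$ is $\alpha$-approximately optimal for $\calL_\lambda$ with $\alpha=\mathrm{poly}\big(\eps\gamma(1-2\eta)\big)$, then $\Pr_{(\bx,y)\sim\calD}[y\inangle{\bw,\bx}\le\gamma]\le\lambda+\eps$.} This follows the template used for learning halfspaces under random/Massart noise by convex-surrogate minimization, adapted to carry the margin shift $\gamma$: one uses feasibility of $\bw^*$ (so $\inangle{\nabla\calL_\lambda(\bw),\bw^*-\bw}\ge-\alpha$ at an approximate optimum), expands $\inangle{-\nabla\calL_\lambda(\bw),\bw^*-\bw}=\Ex\big[y\inangle{\bw^*-\bw,\bx}\,(\lambda+(1-2\lambda)\ind[y\inangle{\bw,\bx}<\gamma])\big]$, and then invokes the RCN structure $\Ex[y\mid\bx]=(1-2\eta)h^*(\bx)$ together with $h^*(\bx)\inangle{\bw^*,\bx}>\gamma$ to lower bound this directional derivative by $\Omega\big(\gamma(1-2\eta)\big)\cdot\big(\Pr[y\inangle{\bw,\bx}\le\gamma]-\lambda\big)$ minus an error term that is made $\le\eps$ by taking $\lambda$ close enough to $\eta$; rearranging yields the stated bound. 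The choice $\lambda\ge\eta$ is exactly what keeps $\min_{\calK}\calL_\lambda\le\calL_\lambda(\bw^*)$ bounded, and $\lambda\approx\eta$ is what kills the cross term.

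\emph{Wrap-up and main obstacle.} Since $\eta$ is unknown, I run the previous two steps for every $\lambda$ on a grid of $\mathrm{poly}(1/\eps,1/(1-2\eta))$ points in $(0,1/2)$; for the grid point $\lambda^\star\in[\eta,\eta+\eps]$ the lemma gives $\Risk_\calU(h_{\bw_{\lambda^\star}};\calD)\le\eta+2\eps$. I then draw a fresh sample of size $\widetilde{O}((d+\log(1/\delta))/\eps^2)$ and output the candidate minimizing robust empirical risk on it; since $\vc(\calL^{\calU}_\calH)\le d+1$, the uniform convergence guarantee~\eqref{eqn:uniform_robust} makes this selection within $O(\eps)$ of the best candidate with probability $\ge1-\delta$. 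Rescaling $\eps$ gives the theorem, with running time $\mathrm{poly}(d,1/\eps,1/\gamma,1/(1-2\eta),\log(1/\delta))$. The routine parts are the Hölder reduction, stochastic gradient descent on a convex function, the grid search, and the uniform-convergence selection; the genuinely delicate part is the structural lemma, i.e.\ establishing uniformly over $\bw\in\calK$ that near-stationarity of the LeakyReLU surrogate implies small $\gamma$-margin disagreement, with the $(1-2\eta)$ dependence appearing correctly and the residual error absorbed by tuning $\lambda$ to $\eta$; a secondary nuisance is executing the $\ell_q$-ball mirror descent and the normalization uniformly across all $p\in[1,\infty]$.
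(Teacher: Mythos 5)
Your high-level route is the same as the paper's: the H\"older/dual-norm identity turning the robust loss into a $\gamma$-margin loss is exactly Lemma~\ref{lem:robloss}, and minimizing a leaky-hinge surrogate with leakage $\lambda$ tuned near $\eta$ over the $\ell_q$-ball by stochastic mirror descent is exactly Theorem~\ref{thm:rcn-robust} (your grid over $\lambda$ plus holdout selection is a sensible way to handle unknown $\eta$, which the paper leaves implicit). The problem is in your ``crux'' structural lemma, on two counts. First, the inequality $\inangle{\nabla\calL_\lambda(\bw),\bw^*-\bw}\ge-\alpha$ is not implied by $\alpha$-suboptimality in function value: convexity only gives $\inangle{\nabla\calL_\lambda(\bw),\bw^*-\bw}\le \calL_\lambda(\bw^*)-\calL_\lambda(\bw)$, and for example $\calL(w)=|w|$ on $[-1,1]$ with $w^*=-1$ and $w=0.01$ is $0.01$-optimal in value while $\nabla\calL(w)\,(w^*-w)\approx-1$. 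Mirror descent hands you only value-suboptimality, so the variational inequality you lean on is available only at an exact constrained minimizer; the paper avoids this entirely by comparing function values (Lemma~\ref{lem:surrogate-lowerbnd} lower-bounds $G_\lambda^\gamma(\bw)$ by the margin loss, Lemma~\ref{lem:surr-upper} upper-bounds the optimum via $\bw^*$, and Lemma~\ref{lem:subopt} combines them). Also, inside your gradient expansion you cannot simply substitute $\Ex[y\mid\bx]=(1-2\eta)h^*(\bx)$, since the indicator multiplying $y$ also depends on $y$; you must condition on whether the label was flipped, as in Lemma~\ref{lem:surr-exact}.

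Second, and independently of how optimality is certified, the conclusion you claim---control of the margin loss at the \emph{full} threshold $\gamma$, $\Pr[y\inangle{\bw,\bx}\le\gamma]\le\lambda+\eps$---cannot follow from approximate minimization of a surrogate that vanishes at margin $\gamma$. A halfspace that places the clean points at margin $\gamma(1-\delta)$ incurs margin-$\gamma$ loss close to one, yet its excess surrogate value is only $O\!\inparen{(1-\eta)(1-2\lambda)\delta+(\lambda-\eta)/\gamma}$, which is below any optimization accuracy you can afford once your grid point satisfies $\lambda\approx\eta$ and $\delta\to0$. This is precisely why the paper's analysis backs off to the $\gamma/2$-margin loss (the factor $1-z/\gamma\ge1/2$ in Lemma~\ref{lem:surrogate-lowerbnd} is only available for $z\le\gamma/2$), i.e.\ it certifies robustness at radius $\gamma/2$, improvable to $(1-c)\gamma$ per the paper's remark but not to $\gamma$ itself. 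The fix is to adopt the value comparison and state your structural lemma and final guarantee at a constant fraction of $\gamma$, which is how the paper settles Theorem~\ref{cor:random_nois}. Two smaller slips: per-example normalization $\bx\mapsto\bx/\norm{\bx}_2$ does not preserve the event $\set{y\inangle{\bw,\bx}\le\gamma}$ (margins are not per-example scale invariant), so rescale all data uniformly by the support radius or assume $\norm{\bx}_p\le1$ as in Theorem~\ref{thm:rcn-robust}; and your grid must be coordinated with the optimization accuracy so that the chosen $\lambda^\star-\eta$ is of order $\eps\gamma(1-2\eta)$, mirroring the paper's choice $\lambda=\frac{\eps\gamma/2+\eta}{1+\eps\gamma}$, $\eps'=\lambda-\eta$.
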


The proof of Theorem~\ref{cor:random_nois} relies on the following key lemma. We show that the structure of the perturbations $\calB$ allows us to relate the robust loss of a halfspace $\bw\in\bbR^d$ with the $\gamma$-margin loss of $\bw$. Before we state the lemma, recall that the dual norm of $\bw$ denoted $\norm{\bw}_{*}$ is defined as $\sup{\set{\inangle{\bu, \bw}: \norm{\bu}\leq 1}}$.

\begin{lem}
\label{lem:robloss}
For any $\bw,\bx\in\bbR^d$ and any $y\in\calY$, 
\[\sup_{\delta \in \calB} \ind{\set{h_{\bw}(\bx+\delta)\neq y}} = \ind{\set{y\inangle{\frac{\bw}{\norm{\bw}_{*}},\bx} \leq \gamma}}.\] 
\end{lem}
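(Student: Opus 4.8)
The plan is to compute the supremum of the robust loss directly by characterizing when a perturbation $\delta \in \calB$ can flip the prediction. Fix $\bw, \bx, y$. The robust loss is $1$ if and only if there exists $\delta$ with $\norm{\delta}_p \le \gamma$ such that $h_{\bw}(\bx+\delta) \ne y$, i.e.\ $y\inangle{\bw, \bx+\delta} \le 0$, which rearranges to $y\inangle{\bw,\delta} \le -y\inangle{\bw,\bx}$. So the robust loss is $1$ exactly when $\inf_{\norm{\delta}_p \le \gamma} y\inangle{\bw,\delta} \le -y\inangle{\bw,\bx}$. The left-hand side is $-\gamma \norm{\bw}_*$ by the definition of the dual norm (scaling the optimal unit-dual-norm vector by $\gamma$ and picking the sign via $y$, using that $\calB$ is origin-symmetric so $-\delta$ is feasible whenever $\delta$ is). Hence the robust loss is $1$ iff $-\gamma\norm{\bw}_* \le -y\inangle{\bw,\bx}$, i.e.\ iff $y\inangle{\bw,\bx} \le \gamma\norm{\bw}_*$, which (dividing by $\norm{\bw}_* > 0$) is exactly the stated indicator $\ind\{y\inangle{\bw/\norm{\bw}_*, \bx} \le \gamma\}$.

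Concretely, the steps I would carry out in order are: (1) rewrite $\sup_{\delta \in \calB}\ind\{h_{\bw}(\bx+\delta)\ne y\}$ as the indicator of the event $\{\exists \delta \in \calB : y\inangle{\bw,\bx+\delta}\le 0\}$, being mildly careful about the boundary case $\inangle{\bw,\bx+\delta} = 0$ where $\sign$ is taken to be negative (or, if one prefers, note the set where equality holds has the same supremum behavior); (2) observe that this event holds iff $\min_{\delta\in\calB} y\inangle{\bw,\delta} \le -y\inangle{\bw,\bx}$; (3) evaluate $\min_{\delta\in\calB} y\inangle{\bw,\delta} = -\max_{\norm{\delta}_p\le\gamma}\inangle{\bw,\delta} = -\gamma\norm{\bw}_*$, invoking the definition of the dual norm recalled just before the lemma and the fact that $\norm{\cdot}_q$ (for $1/p+1/q=1$) is the dual of $\norm{\cdot}_p$; (4) substitute and simplify to get the condition $y\inangle{\bw,\bx}\le\gamma\norm{\bw}_*$; (5) divide through by $\norm{\bw}_* > 0$ (handling the degenerate $\bw = 0$ case separately, where $\norm{\bw}_* = 0$ and the statement reads $\ind\{0 \le \gamma\} = 1$, consistent with $h_0$ outputting a fixed sign that the adversary trivially defeats — or simply assume $\bw \ne 0$ as is standard).

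The only genuine content here is step (3), the identity $\max_{\norm{\delta}_p \le \gamma}\inangle{\bw,\delta} = \gamma\norm{\bw}_*$; everything else is bookkeeping. This is just the definition of the dual norm together with homogeneity, so I do not expect a real obstacle — the main thing to be careful about is the direction of the inequality and the sign bookkeeping with $y \in \{\pm 1\}$, plus the tie-breaking convention for $\sign$ at $0$ so that the equality of indicators is exact rather than off by a measure-zero boundary. Since $\calB$ here is a closed $\ell_p$ ball the supremum over $\delta$ is attained, so the $\sup$ in the lemma statement is really a $\max$, which makes the equivalence in step (2) clean.
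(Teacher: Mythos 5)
Your proposal is correct and follows essentially the same route as the paper: rewrite the supremum of the robust loss as the indicator of $\inf_{\delta\in\calB} y\inangle{\bw,\bx+\delta}\leq 0$, evaluate the infimum via the dual-norm identity $\sup_{\norm{\delta}_p\leq\gamma}\inangle{\bw,\delta}=\gamma\norm{\bw}_*$, and normalize by $\norm{\bw}_*$. Your extra care about attainment of the supremum (compactness of $\calB$), the $\sign(0)$ convention, and the degenerate $\bw=0$ case only tightens details the paper leaves implicit.
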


\begin{proof}
First observe that 
\begin{align*}
\sup_{\delta\in\calB} \ind{\set{h_{\bw}(\bx+\delta)\neq y}} &= \sup_{\delta\in\calB}  \ind{\set{y\inangle{\bw,\bx+\delta}\leq0}} =\ind{\set{\inf_{\delta\in\calB}y\inangle{\bw,\bx+\delta}\leq 0}}.
\end{align*}

This holds because when $\inf_{\delta\in\calB}y\inangle{\bw,\bx+\delta} > 0$, by definition $\forall\delta\in\calB, y\inangle{\bw,\bx+\delta}>0$, which implies that $\sup_{\delta\in\calB} \ind{\set{h_{\bw}(\bx+\delta)\neq y}}=0$. For the other direction, when $\sup_{\delta\in\calB} \ind{\set{h_{\bw}(\bx+\delta)\neq y}}=1$, by definition $\exists \delta\in\calB$ such that $y\inangle{\bw,\bx+\delta}\leq 0$, which implies that $\inf_{\delta\in\calB}y\inangle{\bw,\bx+\delta}\leq 0$.
To conclude the proof, by definition of the set $\calB$ and the dual norm $\norm{\cdot}_{*}$, we have
\[\inf_{\delta\in\calB}y\inangle{\bw,\bx+\delta} =y\inangle{\bw,\bx}-\sup_{\delta\in\calB}\inangle{-y\bw,\delta} =y\inangle{\bw,\bx}-\norm{\bw}_{*}\gamma.\]
\end{proof}

\autoref{lem:robloss} implies that for any distribution $\calD$ over $\calX\times \calY$,  to solve the $\gamma$-robust learning problem
\begin{equation}
    \argmin_{w\in\bbR^d} \Ex_{(\bx,y)\sim\calD}\insquare{\sup_{\delta \in \calB} \ind{\set{h_{\bw}(\bx+\delta)\neq y}}},
\end{equation}
it suffices to solve the $\gamma$-margin learning problem
\begin{equation}
\label{eqn:margin}
    \argmin_{{\norm{\bw}_{*}}=1} \Ex_{(\bx,y)\sim\calD}\insquare{\ind{\set{y\inangle{\bw,\bx} \leq \gamma}}}.
\end{equation}

We will solve the $\gamma$-margin learning problem in Equation~\eqref{eqn:margin} in the random classification noise setting using an appropriately chosen convex surrogate loss. Our convex surrogate loss and its analysis build on a convex surrogate that appears in the appendix of \cite{diakonikolas2019distribution} for learning large $\ell_2$-margin halfspaces under random classification noise w.r.t. the 0-1 loss. We note that the idea of using a convex surrogate to (non-robustly) learn large margin halfspaces in the presence of random classification noise is implicit in a number of prior works, starting with~\cite{DBLP:conf/colt/Bylander94}.

Our robust setting is more challenging for the following reasons. First, we are not interested in only ensuring small 0-1 loss, but rather ensuring small $\gamma$-margin loss. Second, we want to be able to handle all $\ell_p$ norms, as opposed to just the $\ell_2$ norm. As a result, our analysis is somewhat delicate.

Let
\[ \phi(s) = 
    \begin{cases} 
       \lambda(1-\frac{s}{\gamma}), & s > \gamma \\
        (1- \lambda)(1-\frac{s}{\gamma}), & s \leq \gamma
   \end{cases} \;.
\]
We will show that solving the following convex optimization problem:
\begin{equation}
\label{eqn:leaky_relu}
    \argmin_{{\norm{\bw}_{*}}\leq1} G_{\lambda}^{\gamma}(\bw)\stackrel{{\rm def}}{=} \Ex_{(\bx,y)\sim\calD}\insquare{\phi(y\inangle{\bw,\bx})} \;,
\end{equation}
where $\lambda=\frac{\eps\gamma/2 + \eta}{1+\eps\gamma}$,
suffices to solve the $\gamma$-margin learning problem in Equation~\eqref{eqn:margin}. Intuitively, the idea here is that for $\lambda=0$
, the $\phi$ objective is exactly a scaled hinge loss, which gives a learning guarantee w.r.t to the $\gamma$-margin loss when there is no noise ($\eta=0$). When the noise $\eta>0$, we slightly adjust the slopes, such that even correct prediction encounters a loss. The choice of the slope is based on $\lambda$ which will depend on the noise rate $\eta$ and the $\eps$-suboptimality that is required for Equation~\eqref{eqn:margin}. 

We can solve Equation~\eqref{eqn:leaky_relu} with a standard first-order method through samples using Stochastic Mirror Descent, when the dual norm $\norm{\cdot}_{*}$ is an $\ell_q$-norm. We state the following properties of Mirror Descent we will require:

\begin{lem}[see, e.g., Theorem 6.1 in \cite{bubeck2015convex}]
\label{lem:mirrordescent}
Let $G(\bw)\stackrel{{\rm def}}{=}\Ex_{(\bx,y)\sim\calD}\insquare{\ell(\bw,(\bx,y))}$ be a convex function that is $L$-Lipschitz w.r.t. $\norm{\bw}_{q}$ where $q>1$. Then, using the potential function $\psi(\bw)=\frac{1}{2}\norm{\bw}_{q}^{2}$, a suitable step-size $\eta$, and a sequence of iterates $\bw^k$ computed by the following update:
\[\bw^{k+1}=\Pi_{B_q}^{\psi} \inparen{\nabla\psi^{-1}\inparen{\nabla\psi(\bw^{k})-\eta \bg^{k}}} \;,\]
Stochastic Mirror Descent with $\calO(L^2/(q-1)\eps^2)$ stochastic gradients $\bg$ of $G$, will find an $\eps$-suboptimal point $\hat{w}$ such that $\norm{\hat{\bw}}_{q}\leq 1$ and  $G(\hat{\bw})\leq \inf_\bw G(\bw)+\eps$.
\end{lem}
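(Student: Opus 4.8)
This is the textbook convergence guarantee for Stochastic Mirror Descent with the squared-$\ell_q$ potential, so the plan is to reduce it to (i) the strong convexity of the mirror map and (ii) the standard telescoping regret argument, and then turn the regret bound into an optimization guarantee by averaging. The structural input I would isolate first is the classical fact that, for $q\in(1,2]$, the potential $\psi(\bw)=\tfrac12\norm{\bw}_q^2$ is $(q-1)$-strongly convex with respect to $\norm{\cdot}_q$ on $\bbR^d$ (equivalently, its Fenchel conjugate is $\tfrac{1}{q-1}$-smooth with respect to the dual norm $\norm{\cdot}_p$, $1/p+1/q=1$); this is exactly where the $1/(q-1)$ factor in the iteration count originates. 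I would also record the elementary translation that ``$G$ is $L$-Lipschitz with respect to $\norm{\cdot}_q$'' is the same as saying that every (stochastic sub)gradient $\bg$ returned along the run satisfies $\norm{\bg}_p\le L$.

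The core of the argument then proceeds in the usual way. Initialize $\bw^0=\mathbf 0$ (so that $\psi(\bw^0)=0$ and $\nabla\psi(\bw^0)=0$), and use the three-point identity for the Bregman divergence $D_\psi$ together with the strong-convexity fact above to derive, for every $\bu$ with $\norm{\bu}_q\le 1$ and every step $k$, the per-step inequality
\[
\eta\,\inangle{\bg^k,\bw^k-\bu}\;\le\;D_\psi(\bu,\bw^k)-D_\psi(\bu,\bw^{k+1})+\frac{\eta^2}{2(q-1)}\norm{\bg^k}_p^2 .
\]
Summing over $k=0,\dots,T-1$ telescopes the divergence terms down to the boundary term $D_\psi(\bu,\bw^0)=\psi(\bu)\le\tfrac12$; taking $\bu=\bw^{\star}\in\argmin_{\norm{\bw}_q\le1}G(\bw)$, passing to expectations, and using that $\bg^k$ is an unbiased subgradient of $G$ at $\bw^k$ together with convexity of $G$ gives
\[
\frac{1}{T}\sum_{k=0}^{T-1}\Ex[G(\bw^k)]-\inf_{\norm{\bw}_q\le1}G(\bw)\;\le\;\frac{1}{2\eta T}+\frac{\eta L^2}{2(q-1)} .
\]
Optimizing the step size, $\eta=\tfrac1L\sqrt{(q-1)/T}$, bounds the right-hand side by $L/\sqrt{(q-1)T}$. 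Finally, by Jensen's inequality the averaged iterate $\hat{\bw}=\tfrac1T\sum_k\bw^k$ satisfies $\Ex[G(\hat{\bw})]\le\inf_{\norm{\bw}_q\le1}G(\bw)+L/\sqrt{(q-1)T}$, and $\norm{\hat{\bw}}_q\le1$ because $B_q$ is convex. Choosing $T=\calO\!\inparen{L^2/((q-1)\eps^2)}$ makes the expected optimization error at most $\eps$, and a routine amplification (run a few independent copies and keep the iterate with smallest objective estimated on a fresh sample) upgrades this to the high-probability ``finds an $\eps$-suboptimal point'' statement.

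I expect the only genuinely nontrivial ingredient to be the $(q-1)$-strong convexity of $\tfrac12\norm{\cdot}_q^2$ with a dimension-independent constant; everything else is the standard mirror-descent telescoping. I would either quote this directly (e.g.\ via \cite{bubeck2015convex}) or prove it from the second-order behaviour of the $\ell_q$-norm. A minor care point is the regime $q>2$, where the clean constant $q-1$ is replaced by its analogue from the same reference; since the $\ell_p$-perturbation applications of interest have $p\ge 2$ and hence dual exponent $q=p/(p-1)\in(1,2]$, the stated form of the lemma suffices for the way it is used later.
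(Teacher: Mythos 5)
Your proposal is correct and follows essentially the same argument as the source the paper cites for this lemma (Theorem 6.1 in \cite{bubeck2015convex}), which the paper does not reprove: $(q-1)$-strong convexity of $\tfrac12\norm{\cdot}_q^2$ for $q\in(1,2]$, the three-point/telescoping regret bound with bounded dual-norm stochastic subgradients, Jensen's inequality on the averaged iterate, and amplification to a high-probability guarantee. Your caveat about $q>2$ is the right one to flag, since the dimension-independent constant $q-1$ only holds in the range $q\in(1,2]$.
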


\begin{remark}
When $q=1$, we will use the entropy potential function $\psi(\bw)=\sum_{i=1}^{d}w_i\log{w_i}$. In this case, Stochastic Mirror Descent will require $\calO(\frac{L^2\log{d}}{\eps^2})$ stochastic gradients.
\end{remark}

We are now ready to state our main result for this section:

\begin{theorem}
\label{thm:rcn-robust}
Let $\calX = \set{\bx\in\bbR^d:\norm{\bx}_p\leq 1}$. Let $\calD$ be a distribution over $\calX \times \calY$ such that there exists a halfspace $\bw^*\in\bbR^d$ with $\Pr_{\bx\sim\calD_{\bx}}\insquare{\inabs{\inangle{\bw^*,\bx}}>\gamma}=1$ and $y$ is generated by $h_{\bw^*}(\bx):=\sign(\inangle{\bw^*,\bx})$ corrupted by RCN with noise rate $\eta<1/2$. An application of Stochastic Mirror Descent on $G_{\lambda}^{\gamma}(\bw)$, returns, with high probability, a halfspace $\bw$ where $\norm{\bw}_q\leq 1$ with $\gamma/2$-robust misclassification error $\Ex_{(\bx,y)\sim\calD}\insquare{\ind{\set{y\inangle{\bw,\bx}\leq \gamma/2}}}\leq \eta +\eps$ in ${\sf poly}(d, 1/\eps,1/\gamma, 1/(1-2\eta))$ time. 
\end{theorem}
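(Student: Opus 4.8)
The plan is to run Stochastic Mirror Descent (\autoref{lem:mirrordescent}, with the entropic potential of the Remark when $q=1$) on the convex program~\eqref{eqn:leaky_relu}, and then to show that \emph{any} $\eps'$-approximate minimizer $\hat\bw$ of $G_\lambda^\gamma$ over $\set{\bw:\norm{\bw}_q\le 1}$ already has small $\gamma/2$-margin error. Two elementary properties of $\phi$ get the optimization going. Its two linear pieces have slopes $-(1-\lambda)/\gamma$ (on $s\le\gamma$) and $-\lambda/\gamma$ (on $s>\gamma$) and agree at $s=\gamma$, so $\phi$ is convex iff $\lambda\le 1/2$, and $\lambda=\tfrac{\eps\gamma/2+\eta}{1+\eps\gamma}\le\tfrac12\iff\eta\le\tfrac12$ --- this is exactly where the hypothesis $\eta<1/2$ is used; it also yields $\lambda<\tfrac12$ and, by a one-line computation, $\lambda>\eta$ with $\lambda-\eta=\tfrac{\eps\gamma(1/2-\eta)}{1+\eps\gamma}$. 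Also $\inabs{\phi'}\le(1-\lambda)/\gamma\le 1/\gamma$, so on $\calX=\set{\norm{\bx}_p\le1}$ the objective $G_\lambda^\gamma$ is convex and $(1/\gamma)$-Lipschitz w.r.t. $\norm{\cdot}_q$ (the stochastic gradient $\phi'(y\inangle{\bw,\bx})\,y\bx$ has $\ell_p$-norm $\le 1/\gamma$). \autoref{lem:mirrordescent} then produces, from $\calO\!\big(\tfrac{1}{(q-1)\gamma^2\eps'^2}\big)$ fresh oracle samples (with $\log d$ replacing $1/(q-1)$ when $p=\infty$), a point $\hat\bw$ with $\norm{\hat\bw}_q\le1$ and $G_\lambda^\gamma(\hat\bw)\le\min_{\norm{\bw}_q\le1}G_\lambda^\gamma(\bw)+\eps'$; a constant number of reruns, selecting the best by the empirical value of $G_\lambda^\gamma$ (which is bounded on the domain) on an independent sample, makes this hold with high probability. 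I will take $\eps'=\Theta((1-2\eta)\eps)$ at the end.

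The analytic heart is to express $G_\lambda^\gamma$ under RCN. Using $(t)_+-(-t)_+=t$ (with $(t)_+:=\max\set{0,t}$) we rewrite $\phi(s)=(1-2\lambda)(1-s/\gamma)_++\lambda(1-s/\gamma)$. Since $\Ex[y\mid\bx]=(1-2\eta)\,h_{\bw^*}(\bx)$, for every $\bw$
\[
G_\lambda^\gamma(\bw)=\Ex_{\bx}\!\big[g\big(h_{\bw^*}(\bx)\inangle{\bw,\bx}\big)\big],\qquad g(u):=(1-2\lambda)\big[(1-\eta)(1-\tfrac{u}{\gamma})_++\eta(1+\tfrac{u}{\gamma})_+\big]+\lambda\big(1-(1-2\eta)\tfrac{u}{\gamma}\big).
\]
The key structural fact is that $g$ is continuous, piecewise linear, and \emph{strictly decreasing} on $\bbR$: its slopes are $-\tfrac{1-\eta-\lambda}{\gamma}$ on $(-\infty,-\gamma]$, $-\tfrac{(1-\lambda)(1-2\eta)}{\gamma}$ on $[-\gamma,\gamma]$, and $-\tfrac{\lambda-\eta}{\gamma}$ on $[\gamma,\infty)$, all negative because $\eta,\lambda<\tfrac12$ and $\lambda>\eta$. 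I would then record $g(\gamma/2)=\tfrac{(1-\lambda)(1+2\eta)}{2}$, $g(\gamma)=2\eta(1-\lambda)$, $g(\gamma/2)-g(\gamma)=\tfrac{(1-\lambda)(1-2\eta)}{2}$, and the identity where the precise value of $\lambda$ enters,
\[
g(\gamma)-g(1)=\frac{\lambda-\eta}{\gamma}\,(1-\gamma)=\frac{\eps(1-\gamma)(1/2-\eta)}{1+\eps\gamma}\;\le\;\frac{\eps(1-2\eta)}{2},
\]
so the value at the ``worst clean point'' $u=\gamma$ and at the ``best possible point'' $u=1$ differ by only $O(\eps)$.

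With these in hand the two comparisons are immediate. Taking $\bw^*$ normalized so that $\norm{\bw^*}_q=1$ (the margin assumption is with respect to this normalization; in particular $\gamma<1$), we have $h_{\bw^*}(\bx)\inangle{\bw^*,\bx}=\inabs{\inangle{\bw^*,\bx}}\in(\gamma,1]$ a.s., so by monotonicity $\min_{\norm{\bw}_q\le1}G_\lambda^\gamma(\bw)\le G_\lambda^\gamma(\bw^*)=\Ex[g(\inabs{\inangle{\bw^*,\bx}})]\le g(\gamma)$. On the other side, $u(\bx):=h_{\bw^*}(\bx)\inangle{\hat\bw,\bx}$ satisfies $\inabs{u(\bx)}\le\norm{\hat\bw}_q\norm{\bx}_p\le1$, so with $a:=\Pr_{\bx}[u(\bx)\le\gamma/2]$ and using that $g$ is decreasing on $[-1,\gamma/2]$ and on $(\gamma/2,1]$,
\[
G_\lambda^\gamma(\hat\bw)=\Ex[g(u(\bx))]\;\ge\;a\,g(\gamma/2)+(1-a)\,g(1).
\]
Combining with $G_\lambda^\gamma(\hat\bw)\le g(\gamma)+\eps'$, solving for $a$, and then using $g(\gamma/2)-g(1)\ge g(\gamma/2)-g(\gamma)=\tfrac{(1-\lambda)(1-2\eta)}{2}$ together with the displayed bound on $g(\gamma)-g(1)$,
\[
a\;\le\;\frac{g(\gamma)-g(1)+\eps'}{g(\gamma/2)-g(1)}\;\le\;\frac{\tfrac{\eps(1-2\eta)}{2}+\eps'}{\tfrac{(1-\lambda)(1-2\eta)}{2}}\;=\;\frac{\eps}{1-\lambda}+\frac{2\eps'}{(1-\lambda)(1-2\eta)}.
\]
Finally, conditioning on $\bx$ and averaging over the noise, $\Ex_{(\bx,y)\sim\calD}[\ind\set{y\inangle{\hat\bw,\bx}\le\gamma/2}]=(1-\eta)\Pr[u(\bx)\le\gamma/2]+\eta\Pr[u(\bx)\ge-\gamma/2]\le(1-\eta)a+\eta$; since $1-\lambda\ge\tfrac{1-\eta}{2}\ge\tfrac14$, the choice $\eps'=\Theta((1-2\eta)\eps)$ makes $(1-\eta)a=O(\eps)$, and after rescaling $\eps$ by an absolute constant the $\gamma/2$-margin error is at most $\eta+\eps$. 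The iteration count is then $\calO\!\big(\tfrac{1}{(q-1)\gamma^2(1-2\eta)^2\eps^2}\big)$ (with $\log d$ when $p=\infty$) with $\calO(d)$ work per step, giving the claimed $\mathsf{poly}(d,1/\eps,1/\gamma,1/(1-2\eta))$ runtime.

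The step I expect to be the main obstacle is obtaining the \emph{tight} relation between $G_\lambda^\gamma$ and the $\gamma/2$-margin error in the previous two paragraphs: the naive estimates ($\inabs{\inangle{\bw,\bx}}\le1$ and $(1-s/\gamma)_+\ge\tfrac12\ind\set{s\le\gamma/2}$) leak additive error of order $\eta/\gamma$, which is fatal when $\gamma$ is small. Avoiding this requires working with the exact monotone function $g$ and exploiting that the specific slope parameter $\lambda$ is calibrated to the noise rate precisely so that $g(\gamma)-g(1)=O(\eps)$; the hypothesis $\eta<1/2$ re-enters here through $\lambda<1/2$ (needed for convexity of $\phi$) and $\lambda>\eta$ (needed for strict monotonicity of $g$ and for the sign of that gap). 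A minor additional point is the $p=\infty$ ($q=1$) case, which uses the entropic mirror map on the lifted simplex, but this is covered by the stated Mirror Descent guarantees.
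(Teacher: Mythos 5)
Your proof is correct, and it follows the paper's overall strategy: the same surrogate $G_\lambda^\gamma$ with the same calibrated $\lambda=\frac{\eps\gamma/2+\eta}{1+\eps\gamma}$, optimized by Stochastic Mirror Descent over $\set{\norm{\bw}_q\le 1}$, followed by showing that any $\eps'$-suboptimal point has $\gamma/2$-margin error $\le\eta+\eps$. Where you genuinely differ is in how the surrogate is tied to the margin loss. The paper goes through the case decomposition of Lemma~\ref{lem:surr-exact}, the lower bound of Lemma~\ref{lem:surrogate-lowerbnd}, the upper bound $2\eta(1-\lambda)$ of Lemma~\ref{lem:surr-upper}, and the combination in Lemma~\ref{lem:subopt}, which carries the slack term $(\lambda-\eta)\inparen{\frac{1}{\gamma}-1}$ and hence forces the suboptimality target $\eps'=\lambda-\eta=\Theta(\eps\gamma(1-2\eta))$. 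You instead write $G_\lambda^\gamma(\bw)=\Ex_\bx[g(z)]$ for an explicit monotone piecewise-linear $g$ of the clean margin $z$ (your $g$ is Lemma~\ref{lem:surr-exact} in closed form; $g(\gamma)=2\eta(1-\lambda)$ recovers Lemma~\ref{lem:surr-upper}) and compare only the three values $g(\gamma/2)$, $g(\gamma)$, $g(1)$ using monotonicity and $\inabs{z}\le 1$. This is a tidier and in fact tighter execution: the calibration of $\lambda$ enters only through $g(\gamma)-g(1)=\frac{\lambda-\eta}{\gamma}(1-\gamma)\le\frac{\eps(1-2\eta)}{2}$, so $\eps'=\Theta((1-2\eta)\eps)$ suffices, which with $L=1/\gamma$ yields the iteration count $\calO\inparen{\frac{1}{(q-1)\gamma^2(1-2\eta)^2\eps^2}}$ directly, whereas the paper's own choice of $\eps'$ would cost an extra factor of $\gamma^2$ relative to its stated bound. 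Two caveats are shared with the paper rather than introduced by you: the argument needs $\bw^*$ normalized so that $\norm{\bw^*}_q\le 1$ (hence $\gamma<1$), which you state explicitly and the paper leaves implicit in Equation~\eqref{eqn:leaky_relu} and in the use of $\inabs{z}\le 1$ inside Lemma~\ref{lem:surrogate-lowerbnd}; and the high-probability claim requires the repeat-and-validate step you describe, which the paper also glosses over. Your final $\eta+O(\eps)$ bound with a constant rescaling of $\eps$ is fine for the stated polynomial-time guarantee.
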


With Theorem~\ref{thm:rcn-robust}, the proof of Theorem~\ref{cor:random_nois} immediately follows.

\begin{proof}[Proof of Theorem~\ref{cor:random_nois}]
This follows from Lemma~\ref{lem:robloss} and Theorem~\ref{thm:rcn-robust}.
\end{proof}

\begin{remark}
In Theorem~\ref{thm:rcn-robust}, we get a $\gamma/2$-robustness guarantee assuming $\gamma$-robust halfspace $\bw^*$ that is corrupted with random classification noise. This can be strengthened to get a guarantee of $(1-c)\gamma$-robustness for any constant $c>0$.
\end{remark}

The rest of this section is devoted to the proof of Theorem$~\ref{thm:rcn-robust}$. The high-level strategy is to show that an $\eps'$-suboptimal solution to Equation~\eqref{eqn:leaky_relu} gives us an $\eps$-suboptimal solution to Equation~\eqref{eqn:margin} (for a suitably chosen $\eps'$). In Lemma~\ref{lem:surrogate-lowerbnd}, we bound from above the $\gamma/2$-margin loss in terms of our convex surrogate objective $G^\gamma_\lambda$, and in Lemma~\ref{lem:surr-upper} we show that there are minimizers of  our convex surrogate $G^\gamma_\lambda$ such that it is sufficiently small. These are the two key lemmas that we will use to piece everything together.

For any $\bw,\bx\in\bbR^d$, consider the contribution of the objective $G_\lambda^{\gamma}$ of $\bx$, denoted by $G_{\lambda}^{\gamma}(\bw,\bx)$. This is defined as $G_{\lambda}^{\gamma}(\bw,\bx) = \Ex_{y\sim\calD_y(\bx)}\insquare{\phi(y\inangle{\bw,\bx})}= \eta \phi(-z) + (1-\eta)\phi(z)$ where $z=h_{\bw^*}(\bx)\inangle{\bw,\bx}$. In the following lemma, we provide a decomposition of $G_{\lambda}^{\gamma}(\bw,\bx)$ that will help us in proving Lemmas~\ref{lem:surrogate-lowerbnd} and \ref{lem:surr-upper}.

\begin{lem}
\label{lem:surr-exact}
For any $\bw,\bx\in\bbR^d$, let $z=h_{\bw^*}(\bx)\inangle{\bw,\bx}$. Then, we have that: 
$$G_{\lambda}^{\gamma}(\bw,\bx)=\inparen{\eta- \lambda}\inparen{\frac{z}{\gamma}} + \lambda + \eta - 2 \lambda \eta + \ind{\set{-\gamma \leq z \leq \gamma}} (1-\eta)(1-2\lambda)\inparen{1-\frac{z}{\gamma}} + \ind{\set{z < -\gamma}}(1-2\lambda)\inparen{1-2\eta-\frac{z}{\gamma}} \;.$$
\end{lem}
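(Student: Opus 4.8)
The plan is a direct three-way case analysis on the value of $z = h_{\bw^*}(\bx)\inangle{\bw,\bx}$ relative to the thresholds $\pm\gamma$, starting from the identity $G_{\lambda}^{\gamma}(\bw,\bx) = \eta\,\phi(-z) + (1-\eta)\,\phi(z)$ that was already recorded before the lemma. The only structural fact to exploit is that $\phi$ switches branch at argument $\gamma$: the upper branch $\lambda(1-s/\gamma)$ applies to $\phi(z)$ exactly when $z>\gamma$, and to $\phi(-z)$ exactly when $-z>\gamma$, i.e. when $z<-\gamma$. This partitions $\bbR$ into the three regimes $z>\gamma$, $-\gamma\le z\le\gamma$, and $z<-\gamma$, which are precisely the regimes selected by the two indicators in the claimed formula.

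In each regime I would substitute the relevant branches and expand. For $z>\gamma$: $\phi(z)=\lambda(1-z/\gamma)$ and $\phi(-z)=(1-\lambda)(1+z/\gamma)$, so after collecting the constant term and the coefficient of $z/\gamma$ one gets $(\eta-\lambda)(z/\gamma)+\lambda+\eta-2\lambda\eta$. For $-\gamma\le z\le\gamma$: both evaluations use the lower branch, giving $\phi(z)=(1-\lambda)(1-z/\gamma)$ and $\phi(-z)=(1-\lambda)(1+z/\gamma)$, hence $G_{\lambda}^{\gamma}(\bw,\bx)=(1-\lambda)\big(1+(2\eta-1)z/\gamma\big)$. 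For $z<-\gamma$: $\phi(z)=(1-\lambda)(1-z/\gamma)$ and $\phi(-z)=\lambda(1+z/\gamma)$, giving $(\lambda+\eta-1)(z/\gamma)+1-\lambda-\eta+2\eta\lambda$. The boundary points $z=\pm\gamma$ need no special treatment since $\phi$ is continuous there (both branch formulas agree at $s=\gamma$).

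To finish, I would check that the asserted closed form reduces to each of these three expressions once the indicators are fixed to their values in the corresponding regime. For $z>\gamma$ and $z<-\gamma$ this is immediate; the only place that requires a line of algebra is the middle regime, where both indicator-free terms plus $(1-\eta)(1-2\lambda)(1-z/\gamma)$ must collapse to $(1-\lambda)+(1-\lambda)(2\eta-1)z/\gamma$ — this follows from $\lambda+\eta-2\lambda\eta+(1-\eta)(1-2\lambda)=1-\lambda$ for the constant and $(\eta-\lambda)-(1-\eta)(1-2\lambda)=(1-\lambda)(2\eta-1)$ for the slope — and similarly for $z<-\gamma$ one verifies $\lambda+\eta-2\lambda\eta+(1-2\lambda)(1-2\eta)=1-\lambda-\eta+2\eta\lambda$ and $(\eta-\lambda)-(1-2\lambda)=\eta+\lambda-1$.

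There is no conceptual obstacle; the entire content is careful bookkeeping. The main risk is an arithmetic slip in tracking the $2\eta\lambda$ cross-terms, and I expect the verification of the $z<-\gamma$ case against the $(1-2\lambda)(1-2\eta-z/\gamma)$ summand to be the most error-prone step, so I would write that expansion out in full.
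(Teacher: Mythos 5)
Your proposal is correct and follows essentially the same route as the paper: split into the three regimes $z>\gamma$, $-\gamma\le z\le\gamma$, $z<-\gamma$ according to which branch of $\phi$ applies to $\phi(z)$ and $\phi(-z)$, expand $\eta\,\phi(-z)+(1-\eta)\,\phi(z)$ in each, and verify the claimed indicator form by matching constants and $z/\gamma$-coefficients (the paper organizes this last step as computing $l_2-l_1$ and $l_3-l_1$ rather than checking the reductions, but the algebra is identical). All of your case expansions and coefficient identities check out.
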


\begin{proof}
Based on the definition of the surrogate loss, it suffices to consider three cases:
\begin{description}
    \item[Case $z > \gamma$]:
    \begin{align*}
        l_1(z) &\stackrel{{\rm def}}{=} \eta \phi(-z) + (1-\eta)\phi(z) \\
        &= \eta (1- \lambda) (1+z/\gamma) + (1-\eta)\lambda(1-z/\gamma)\\
        &= \inparen{\eta(1- \lambda) -(1-\eta)\lambda}\inparen{\frac{z}{\gamma}} + (1-\eta)\lambda + \eta(1- \lambda)\\
        &= \inparen{\eta- \lambda}\inparen{\frac{z}{\gamma}} + \lambda + \eta - 2 \lambda \eta.
    \end{align*}
    \item[Case $-\gamma \leq z \leq \gamma$]:
    \begin{align*}
        l_2(z) &\stackrel{{\rm def}}{=} \eta \phi(-z) + (1-\eta)\phi(z) \\
        &= \eta(1- \lambda)(1+z/\gamma)+(1-\eta)(1- \lambda)(1-z/\gamma)\\
        &=-(1-2\eta)(1 - \lambda) \inparen{\frac{z}{\gamma}} + 1 - \lambda.
    \end{align*}
    \item[Case $z < -\gamma$]:
    \begin{align*}
        l_3(z) &\stackrel{{\rm def}}{=} \eta\lambda(1+z/\gamma) + (1-\eta)(1 - \lambda)(1-z/\gamma) \\
    &= (\eta\lambda - (1-\eta)(1- \lambda))\inparen{ \frac{z}{\gamma}} + \eta\lambda + (1-\eta)(1- \lambda)\\
    &= (\eta + \lambda - 1)\inparen{ \frac{z}{\gamma}} + 1 -\eta - \lambda + 2\eta\lambda.
    \end{align*}
\end{description}
Considering the three cases above, we can write
\begin{equation*}
G(z)= l_1(z) + \ind{\set{-\gamma \leq z \leq \gamma}} \inparen{l_2(z) - l_1(z)} + \ind{\set{z < -\gamma}}\inparen{l_3(z) - l_1(z)} \;.
\end{equation*}
Then, we calculate $l_2(z)-l_1(z)$ and $l_3(z)-l_1(z)$ as follows:
\begin{align*}
    l_2(z) - l_1(z) &= -(1-2\eta)(1 - \lambda) \inparen{\frac{z}{\gamma}} + 1 - \lambda - \inparen{\eta- \lambda}\inparen{\frac{z}{\gamma}} - \lambda - \eta + 2 \lambda \eta\\
    &= -((1-2\eta)(1 - \lambda) + \eta - \lambda) \inparen{\frac{z}{\gamma}} + (1 - \eta)(1 - 2\lambda)\\
    &= (1 - \eta)(1 - 2\lambda) \inparen{1 - \frac{z}{\gamma}} \;, \textrm{ and }\\
    l_3(z)-l_1(z) &= (\eta + \lambda - 1)\inparen{ \frac{z}{\gamma}} + 1 -\eta - \lambda + 2\eta\lambda - \inparen{\eta- \lambda}\inparen{\frac{z}{\gamma}} - \lambda - \eta + 2 \lambda \eta\\
    &= (2\lambda - 1)\inparen{\frac{z}{\gamma}} + (1 -2\eta) (1 - 2 \lambda)\\
    &= (2\lambda - 1)\inparen{\frac{z}{\gamma} + 2\eta - 1}.
\end{align*}
Using the above, we have that 
\begin{align*}
    G(z) &= l_1(z) + \ind{\set{-\gamma \leq z \leq \gamma}} (1-\eta)(1-2\lambda)\inparen{1-\frac{z}{\gamma}} \\
&+ \ind{\set{z < -\gamma}}(1-2\lambda)\inparen{1-2\eta-\frac{z}{\gamma}}.
\end{align*}
\end{proof}

The following lemma allows us to bound from below our convex surrogate $\Ex_{\bx\sim\calD_\bx}[G_{\lambda}^{\gamma}(\bw,\bx)]$ in terms of the $\gamma/2$-margin loss of $\bw$. 

\begin{lem}
\label{lem:surrogate-lowerbnd}
Assume that $\lambda$ is chosen such that $\lambda < 1/2$ and $\eta < \lambda$. Then, for any $\bw\in\bbR^d$, $\Ex_{\bx\sim\calD_\bx}[G_{\lambda}^{\gamma}(\bw,\bx)] \geq 
\frac{\eta-\lambda}{\gamma} + \frac{1}{2}(1-2\lambda)(1-\eta)\Ex_{x}\insquare{\ind{\set{z \leq \frac{\gamma}{2}}}} + \lambda + \eta - 2 \lambda \eta$.
\end{lem}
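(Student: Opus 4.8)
The plan is to establish the inequality \emph{pointwise} in $\bx$ and then integrate. Concretely, I will show that for every $\bx$ with $\norm{\bx}_p\le 1$ (so that, since $\norm{\bw}_*\le 1$ throughout the regime where the lemma is applied, $z=h_{\bw^*}(\bx)\inangle{\bw,\bx}$ satisfies $|z|\le\norm{\bw}_*\norm{\bx}_p\le 1$),
\[
G_{\lambda}^{\gamma}(\bw,\bx)\ \ge\ \frac{\eta-\lambda}{\gamma}+\lambda+\eta-2\lambda\eta+\tfrac12(1-2\lambda)(1-\eta)\,\ind{\set{z\le \gamma/2}},
\]
after which taking $\Ex_{\bx\sim\calD_\bx}$ of both sides and using linearity of expectation yields exactly the claimed bound. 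Every term will be read off the exact decomposition of $G_{\lambda}^{\gamma}(\bw,\bx)$ from Lemma~\ref{lem:surr-exact}.

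\textbf{The linear and constant terms.} First I would bound the linear term $(\eta-\lambda)(z/\gamma)$. By hypothesis $\eta<\lambda$, so its coefficient is negative, and since $z\le 1$ we get $(\eta-\lambda)(z/\gamma)\ge(\eta-\lambda)/\gamma$ (equivalently, $\tfrac{\eta-\lambda}{\gamma}(z-1)\ge 0$). The additive constant $\lambda+\eta-2\lambda\eta$ is simply retained. This accounts for the terms $\tfrac{\eta-\lambda}{\gamma}$ and $\lambda+\eta-2\lambda\eta$ on the right-hand side.

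\textbf{The two indicator terms.} It remains to show that the two indicator contributions from Lemma~\ref{lem:surr-exact}, namely $\ind{\set{-\gamma\le z\le\gamma}}(1-\eta)(1-2\lambda)(1-z/\gamma)$ and $\ind{\set{z<-\gamma}}(1-2\lambda)(1-2\eta-z/\gamma)$, are together at least $\tfrac12(1-2\lambda)(1-\eta)\ind{\set{z\le\gamma/2}}$. Since $\lambda<1/2$ and $\eta<1/2$, both $1-2\lambda$ and $1-\eta$ are positive, so both contributions are nonnegative; this already handles the case $z>\gamma/2$, where the right-hand indicator is $0$. If $-\gamma\le z\le\gamma/2$, only the first term is active and $1-z/\gamma\ge 1/2$, so the contribution is at least $\tfrac12(1-\eta)(1-2\lambda)$. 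If $z<-\gamma$, only the second term is active, and there $-z/\gamma>1$, so $1-2\eta-z/\gamma>2(1-\eta)\ge\tfrac12(1-\eta)$, giving again at least $\tfrac12(1-2\lambda)(1-\eta)$. In both of these last two cases $\ind{\set{z\le\gamma/2}}=1$, so the pointwise inequality holds in all cases; summing with the previous paragraph and taking expectations completes the argument.

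\textbf{Expected obstacle.} There is no substantive difficulty; the points requiring care are (i) the boundary value $z=-\gamma$, which the indicator $\ind{\set{-\gamma\le z\le\gamma}}$ assigns to the first term, consistent with $1-z/\gamma=2$ there; and (ii) invoking the normalization $\norm{\bw}_*\le 1$, $\norm{\bx}_p\le 1$ to get $z\le 1$ — this is precisely what turns $(\eta-\lambda)(z/\gamma)$ into the clean constant $(\eta-\lambda)/\gamma$, and it matches the domain of the optimization problem~\eqref{eqn:leaky_relu} in which this lemma is used.
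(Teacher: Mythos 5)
Your proof is correct and follows essentially the same route as the paper: the same decomposition from Lemma~\ref{lem:surr-exact}, the same bound on the linear term via $|z|\le 1$ (using the norm constraints), and the same case analysis on $z<-\gamma$ and $-\gamma\le z\le\gamma/2$ yielding the factor $\tfrac12(1-2\lambda)(1-\eta)$. The only cosmetic difference is that you argue pointwise in $\bx$ and then integrate, whereas the paper bounds each expectation term directly; this changes nothing of substance.
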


\begin{proof}
By \autoref{lem:surr-exact}, and linearity of expectation, we have that 
\begin{eqnarray*}
    \Ex_{\bx\sim\calD_\bx}[G_{\lambda}^{\gamma}(\bw,\bx)]
    &=& \inparen{\eta- \lambda}\Ex_{\bx\sim\calD_\bx}\insquare{\frac{z}{\gamma}} + \lambda + \eta - 2 \lambda \eta + (1-\eta)(1-2\lambda) \Ex_{\bx\sim\calD_\bx}\insquare{\ind{\set{-\gamma \leq z \leq \gamma}}\inparen{1-\frac{z}{\gamma}}} \\
    &+& (1-2\lambda)\Ex_{\bx\sim\calD_\bx}\insquare{\ind{\set{z < -\gamma}}\inparen{1-2\eta-\frac{z}{\gamma}}} \;.
\end{eqnarray*}
First, observe that for any $\bx$, $ -1 \leq z=h_{\bw^*}(\bx)\inangle{\bw,\bx} \leq 1$ and since $\eta\le  \lambda$, we have 
\[(\eta- \lambda)\Ex_{\bx\sim\calD_\bx}\insquare{\frac{z}{\gamma}} \geq \frac{\eta -\lambda}{\gamma} \;.\]
Then we observe that whenever $z<-\gamma$, $1-\frac{z}{\gamma}-2\eta > 2(1-\eta) > (1- \eta)/2$ and $\lambda \le 1/2$, thus we can bound from below the third term 
\begin{equation*}
(1-2\lambda)\Ex_{\bx\sim\calD_\bx}\insquare{\ind{\set{z < -\gamma}}\inparen{1-2\eta-\frac{z}{\gamma}}} \geq \frac{1}{2}(1 - 2\lambda)(1-\eta)\Ex_{\bx\sim\calD_\bx}\insquare{\ind{\set{z < -\gamma}}} \;.
\end{equation*}
Next we note that whenever $-\gamma \leq z \leq \gamma$, $1-\frac{z}{\gamma}\geq0$. This implies that instead of considering $\ind{\set{-\gamma \leq z \leq \gamma}}$, we can relax this and consider the subset $\ind{\set{-\gamma \leq z \leq \frac{\gamma}{2}}}$, and on this subset $1-\frac{z}{\gamma}\geq 1/2$. Thus, we can bound the second term from below as follows:
\begin{equation*}
        (1 - 2 \lambda)(1-\eta)\Ex_{\bx\sim\calD_\bx}\insquare{\ind{\set{-\gamma \leq z \leq \gamma}}\inparen{1-\frac{z}{\gamma}}} \geq \frac{1}{2}(1-2\lambda)(1-\eta)\Ex_{\bx\sim\calD_\bx}\insquare{\ind{\set{-\gamma \leq z \leq \frac{\gamma}{2}}}}.
\end{equation*}
Combining the above, we obtain
\begin{eqnarray*}
    \Ex_{\bx\sim\calD_\bx}[G_{\lambda}^{\gamma}(\bw,\bx)] 
    &\geq& \frac{\eta- \lambda}{\gamma}+\frac{1}{2}(1-2\lambda)(1-\eta)\Ex_{x}\insquare{\ind{\set{-\gamma \leq z \leq \frac{\gamma}{2}}}+\ind{\set{z < -\gamma}}}+ \lambda + \eta - 2 \lambda \eta\\
    &\geq& \frac{\eta-\lambda}{\gamma} + \frac{1}{2}(1-2\lambda)(1-\eta)\Ex_{x}\insquare{\ind{\set{z \leq \frac{\gamma}{2}}}} + \lambda + \eta - 2 \lambda \eta \;,
\end{eqnarray*}
as desired.
\end{proof}

We now show that there exist minimizers of the convex surrogate $G^\gamma_\lambda$ such that it is sufficiently small, which will be useful later in choosing the suboptimality parameter $\epsilon'$. 

\begin{lem}
\label{lem:surr-upper}
Assume that $\lambda$ is chosen such that $\lambda < 1/2$ and $\eta < \lambda$. Then we have that
\[\inf_{\bw\in\bbR^d} \Ex_{\bx\sim\calD_\bx}[G_{\lambda}^{\gamma}(\bw,\bx)] \le 2\eta(1 - \lambda).\]
\end{lem}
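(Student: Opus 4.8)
The plan is to prove the bound by exhibiting a single witness that attains it, namely the ground-truth halfspace $\bw^*$ itself. Since $\bw^*$ is a $\gamma$-robust separator of the clean distribution $\calD_\bx$, it satisfies $\Pr_{\bx\sim\calD_\bx}[\inabs{\inangle{\bw^*,\bx}}>\gamma]=1$; in the setting of Theorem~\ref{thm:rcn-robust} we may additionally take $\norm{\bw^*}_*=1$, so that $\bw^*$ is in fact feasible for the constrained problem in Equation~\eqref{eqn:leaky_relu}, but this normalization is not needed for the stated $\inf$ over all of $\bbR^d$.

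First I would plug $\bw=\bw^*$ into the per-example objective. Then $z=h_{\bw^*}(\bx)\inangle{\bw^*,\bx}=\inabs{\inangle{\bw^*,\bx}}>\gamma$ for $\calD_\bx$-almost every $\bx$, so only the first branch of the case analysis in \autoref{lem:surr-exact} survives: the indicators $\ind{\set{-\gamma\le z\le\gamma}}$ and $\ind{\set{z<-\gamma}}$ both vanish, and $G_\lambda^\gamma(\bw^*,\bx)=l_1(z)=(\eta-\lambda)(z/\gamma)+\lambda+\eta-2\lambda\eta$ pointwise a.s. Next I would bound $l_1(z)$ from above using the two hypotheses of the lemma: since $\eta<\lambda$ the slope $\eta-\lambda$ is negative, and since $z>\gamma$ we have $z/\gamma>1$, hence $(\eta-\lambda)(z/\gamma)\le\eta-\lambda$. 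Substituting gives $G_\lambda^\gamma(\bw^*,\bx)\le(\eta-\lambda)+\lambda+\eta-2\lambda\eta=2\eta(1-\lambda)$ for a.e.\ $\bx$. Taking expectation over $\bx\sim\calD_\bx$ yields $\Ex_{\bx\sim\calD_\bx}[G_\lambda^\gamma(\bw^*,\bx)]\le 2\eta(1-\lambda)$, and the infimum over $\bbR^d$ is no larger, which is the claim.

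There is essentially no hard step here; the two things to get right are (i) recognizing that $\gamma$-robustness of $\bw^*$ forces every point into the regime $z>\gamma$, so that all the noise-induced correction terms of \autoref{lem:surr-exact} drop out and we are left with the pure $l_1$ contribution, and (ii) observing that the negative sign of $\eta-\lambda$ turns the inequality $z/\gamma>1$ in our favor rather than against us. I would also note in passing that the hypotheses $\lambda<1/2$ and $\eta<\lambda$ are exactly consistent with the earlier choice $\lambda=(\eps\gamma/2+\eta)/(1+\eps\gamma)$ — both reduce to $\eta<1/2$ — so this upper bound will be directly usable when calibrating the suboptimality parameter $\eps'$ in the proof of Theorem~\ref{thm:rcn-robust}.
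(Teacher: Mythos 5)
Your proposal is correct and follows essentially the same route as the paper: take $\bw^*$ as the witness, use $\Pr[\,\inabs{\inangle{\bw^*,\bx}}>\gamma\,]=1$ so that only the $l_1$ branch of \autoref{lem:surr-exact} survives, and then use $\eta<\lambda$ together with $z/\gamma>1$ to bound $(\eta-\lambda)(z/\gamma)\le\eta-\lambda$, giving $2\eta(1-\lambda)$. Your pointwise bound before taking the expectation is, if anything, a slightly cleaner phrasing of the paper's argument (which states the same inequality at the level of the expectation of $z/\gamma$), so there is nothing to fix.
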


\begin{proof}
By definition, we have that $\inf_{\bw\in\bbR^d} \Ex_{\bx\sim\calD_\bx}[G_{\lambda}^{\gamma}(\bw,\bx)]\leq \Ex_{\bx\sim\calD_\bx}[G_{\lambda}^{\gamma}(\bw^*,\bx)]$. By assumption, with probability 1 over $\bx\sim\calD_{\bx}$, we have $h_{\bw^*}(\bx)\inangle{\bw^*,\bx} > \gamma$. Thus, by \autoref{lem:surr-exact}, we have 
\begin{align*}
    \Ex_{\bx}[G_{\lambda}^{\gamma}(\bw^*,\bx)] = \inparen{\eta- \lambda}\inparen{\frac{z}{\gamma}} + \lambda + \eta - 2 \lambda \eta
    \le \eta - \lambda + \lambda + \eta - 2 \lambda \eta = 2\eta(1 - \lambda) \;,
\end{align*}
where the last inequality follows from the fact that $\eta < \lambda$ and the fact that $\Ex_{\bx}\insquare{\frac{z}{\gamma}}>1$.
\end{proof}
Using the above lemmas, we are now able to bound from above the $\gamma/2$-margin loss of a halfspace $\bw$ that is $\eps'$-suboptimal for our convex optimization problem (see Equation~\eqref{eqn:leaky_relu}).
\begin{lem}
\label{lem:subopt}
For any $\eps' \in (0,1)$ and any $\bw \in \bbR^d$ such that $\Ex_{x\sim\calD_x}[G_{\lambda}^{\gamma}(\bw,\bx)]\leq \Ex_{x\sim\calD_x}[G_{\lambda}^{\gamma}(\bw^*,\bx)] + \eps'$, the $\gamma/2$-missclassification error of $\bw$
satisfies
$$\Ex_{(\bx,y)\sim\calD}\insquare{\ind{\set{y\inangle{\bw,\bx}\leq \gamma/2}}} \leq \eta + \frac{2}{(1-2\lambda)}\inparen{\eps' + (\lambda - \eta)\inparen{\frac{1}{\gamma} - 1}} \;.$$ 
\end{lem}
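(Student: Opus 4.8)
The plan is to chain together the two estimates already in hand — the surrogate lower bound of \autoref{lem:surrogate-lowerbnd} and the surrogate upper bound of \autoref{lem:surr-upper} — after first relating the noisy $\gamma/2$-margin loss to the ``clean'' quantity $\Ex_{\bx\sim\calD_\bx}[\ind{\set{z\leq\gamma/2}}]$ that appears in \autoref{lem:surrogate-lowerbnd}, where throughout $z = h_{\bw^*}(\bx)\inangle{\bw,\bx}$.

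First I would peel off the noise. Conditioning on $\bx$ and using that under $\calD$ we have $y = h_{\bw^*}(\bx)$ with probability $1-\eta$ and $y=-h_{\bw^*}(\bx)$ with probability $\eta$,
\[
\Ex_{(\bx,y)\sim\calD}\insquare{\ind{\set{y\inangle{\bw,\bx}\leq \gamma/2}}} = (1-\eta)\Ex_{\bx\sim\calD_\bx}\insquare{\ind{\set{z\leq \gamma/2}}} + \eta\,\Ex_{\bx\sim\calD_\bx}\insquare{\ind{\set{-z\leq \gamma/2}}} \leq \eta + (1-\eta)\Ex_{\bx\sim\calD_\bx}\insquare{\ind{\set{z\leq \gamma/2}}},
\]
since the second indicator is at most $1$. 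So it remains to upper bound $(1-\eta)\Ex_{\bx}[\ind{\set{z\leq \gamma/2}}]$.

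Next I would rearrange \autoref{lem:surrogate-lowerbnd}. Because $\lambda<1/2$ (hence $1-2\lambda>0$) and $1-\eta>0$, that lemma gives
\[
(1-\eta)\Ex_{\bx\sim\calD_\bx}\insquare{\ind{\set{z\leq \gamma/2}}} \leq \frac{2}{1-2\lambda}\inparen{\Ex_{\bx\sim\calD_\bx}[G_{\lambda}^{\gamma}(\bw,\bx)] - \frac{\eta-\lambda}{\gamma} - \lambda - \eta + 2\lambda\eta}.
\]
Then, by the suboptimality hypothesis together with \autoref{lem:surr-upper}, $\Ex_{\bx}[G_{\lambda}^{\gamma}(\bw,\bx)] \leq \Ex_{\bx}[G_{\lambda}^{\gamma}(\bw^*,\bx)] + \eps' \leq 2\eta(1-\lambda) + \eps'$. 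Substituting this in and using the identity $2\eta(1-\lambda) - \lambda - \eta + 2\lambda\eta = \eta-\lambda$, the parenthesized expression collapses to $\eps' + (\eta-\lambda) - \tfrac{\eta-\lambda}{\gamma} = \eps' + (\lambda-\eta)\inparen{\tfrac{1}{\gamma}-1}$; combining with the first display yields exactly the claimed inequality.

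There is essentially no obstacle beyond bookkeeping. The only points requiring care are (i) that the sign conditions $\lambda<1/2$ and $\eta<\lambda$ are precisely what make the division by $1-2\lambda$ preserve the inequality direction (and, when $\gamma\leq 1$, make the term $(\lambda-\eta)(1/\gamma-1)$ nonnegative so the bound is meaningful), and (ii) that the algebraic cancellation $2\eta(1-\lambda)-\lambda-\eta+2\lambda\eta = \eta-\lambda$ is executed correctly; everything else is a direct substitution of the two prior lemmas.
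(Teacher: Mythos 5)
Your proposal is correct and follows essentially the same route as the paper: it chains Lemma~\ref{lem:surrogate-lowerbnd} with the suboptimality hypothesis and the bound $\Ex_{\bx}[G_{\lambda}^{\gamma}(\bw^*,\bx)]\leq 2\eta(1-\lambda)$ from Lemma~\ref{lem:surr-upper}, rearranges, and finishes with the same noise-decomposition inequality $\Ex_{(\bx,y)\sim\calD}[\ind\{y\inangle{\bw,\bx}\leq\gamma/2\}]\leq \eta+(1-\eta)\Ex_{\bx}[\ind\{z\leq\gamma/2\}]$. The only difference is that you spell out the algebraic cancellation and the conditioning-on-$\bx$ step, which the paper leaves implicit.
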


\begin{proof}
By \autoref{lem:surrogate-lowerbnd} and \autoref{lem:surr-upper}, we have 
\begin{align*}
       \frac{\eta-\lambda}{\gamma} + \frac{1}{2}(1-2\lambda)(1-\eta)\Ex_{x}\insquare{\ind{\set{z \leq \frac{\gamma}{2}}}} + \lambda + \eta - 2 \lambda \eta \leq 2\eta(1 - \lambda) + \eps'.
\end{align*}
This implies
\[
(1- \eta)\Ex_{x}\insquare{\ind{\set{z \leq \frac{\gamma}{2}}}} \le  \frac{2}{(1-2\lambda)}\inparen{\eps' + (\lambda - \eta)\inparen{\frac{1}{\gamma} - 1}}.
\]
Since $\Ex_{(\bx,y)\sim\calD}\insquare{\ind{\set{y\inangle{\bw,\bx}\leq \gamma/2}}} \le \eta + (1- \eta)\Ex_{x}\insquare{\ind{\set{z \leq \frac{\gamma}{2}}}}$, we get the desired result.
\end{proof}

We are now ready to prove 
Theorem~\ref{thm:rcn-robust}.

\begin{proof}[ Proof of Theorem~\ref{thm:rcn-robust}]
Based on \autoref{lem:subopt}, we will choose $\lambda, \eps'$ such that $$\frac{2}{(1-2\lambda)}\inparen{\eps' + (\lambda - \eta)\inparen{\frac{1}{\gamma} - 1}} \leq \eps \;.$$ 
By setting $\eps' = \lambda - \eta$, this condition reduces to
\[\frac{2(\lambda - \eta)}{(1-2\lambda)} \leq \gamma\eps \;.\]
This implies that we need $\lambda \leq \frac{\eps\gamma/2 + \eta}{1+\eps\gamma}$. We will choose $\lambda=\frac{\eps\gamma/2 + \eta}{1+\eps\gamma}$. Note that our analysis relied on having $\lambda \le 1/2$ and $\eta\le \lambda$. These conditions combined imply that we should choose $\lambda$ such that $\eta \le \lambda \le 1/2$. Our choice of $\lambda=\frac{\eps\gamma/2 + \eta}{1+\eps\gamma}$ satisfies these conditions, since
\[ \frac{\eps\gamma/2 + \eta}{1+\eps\gamma} - \eta = \frac{\eps\gamma/2 + \eta - \eta (1+\eps\gamma)}{1+\eps\gamma} = \frac{\eps\gamma(1/2-\eta)}{1+\eps\gamma} \ge 0 \;,\]
and 
\[ \frac{\eps\gamma/2 + \eta}{1+\eps\gamma} - \frac{1}{2} =  \frac{\eps\gamma/2 + \eta - 1/2(1+\eps\gamma)}{1+\eps\gamma}=\frac{\eta - 1/2}{1+\eps\gamma} \le 0 \;.\]
By our choice of $\lambda$, we have that $\eps' = \lambda -\eta = \frac{\eps\gamma(1/2-\eta)}{1+\eps\gamma}=\frac{\eps\gamma(1-2\eta)}{2(1+\eps\gamma)}$. By the guarantees of Stochastic Mirror Descent (see \autoref{lem:mirrordescent}), our theorem follows with $\calO\inparen{\frac{1}{\eps^2\gamma^2(1-2\eta)^2(q-1)}}$ samples for $q>1$ and  $\calO\inparen{\frac{\log d}{\eps^2\gamma^2(1-2\eta)^2}}$ samples for $q=1$. 
\end{proof}

\section{Conclusion}

In this paper, we provide necessary and sufficient conditions for perturbation sets $\calU$, under which we can efficiently solve the robust empirical risk minimization $(\RERM)$ problem. We give a polynomial time algorithm to solve $\RERM$ given access to a polynomial time separation oracle for $\calU$. In addition, we show that an efficient {\em approximate} separation oracle for $\calU$ is necessary for even computing the robust loss of a halfspace. As a corollary, we show that halfspaces are efficiently robustly PAC learnable for a broad range of perturbation sets. By relaxing the realizability assumption, we show that under random classification noise, we can efficiently robustly PAC learn halfspaces with respect to any $\ell_p$ perturbations. An interesting direction for future work 
is to understand the computational complexity of robustly PAC learning halfspaces under stronger noise models, including Massart noise and agnostic noise.
\section*{Acknowledgments}

We thank Sepideh Mahabadi for many insightful and helpful discussions, and Brian Bullins for helpful discussions on Mirror Descent. This work was initiated when the authors were visiting the Simons Institute for the Theory of Computing as part of the Summer 2019 program on {\it Foundations of Deep Learning}. Work by N.S. and O.M. was partially supported by NSF award IIS-1546500 and by DARPA\footnote{This paper does not reflect the position or the policy of the Government, and no endorsement should be inferred.} cooperative agreement HR00112020003. I.D. was supported by NSF Award CCF-1652862 (CAREER), a Sloan Research Fellowship, and a DARPA Learning with Less Labels (LwLL) grant. S.G. was supported by the JP Morgan AI Research PhD Fellowship.


\addcontentsline{toc}{section}{References}
\DeclareUrlCommand{\Doi}{\urlstyle{sf}}
\renewcommand{\path}[1]{\small\Doi{#1}}
\renewcommand{\url}[1]{\href{#1}{\small\Doi{#1}}}
\bibliographystyle{alphaurl}
\bibliography{refs}

\newpage
\onecolumn

\section*{Appendix: Another Approach to Large Margin Learning under Random Classification Noise}

We remark that $\gamma$-margin learning of halfspaces has been studied in earlier work, and we have algorithms such as Margin Perceptron \cite{DBLP:journals/ml/BalcanBS08} and SVM. The Margin Perceptron (for $\ell_2$ margin) and other $\ell_p$ margin algorithms have been also implemented in the SQ model \cite{DBLP:conf/soda/FeldmanGV17}. But no explicit connection has been made to adversarial robustness.

We present here a simple approach to learn $\gamma$-margin halfspaces under random classification noise using only a convex surrogate loss and Stochastic Mirror Descent. The construction of the convex surrogate is based on learning generalized linear models with a suitable link function $u:\bbR \to \bbR$. To the best of our knowledge, the result of this section is not explicit in prior work.

\begin{theorem}
\label{thm:rcn-robust-2}
Let $\calX = \set{\bx\in\bbR^d:\norm{\bx}_p\leq 1}$. Let $\calD$ be a distribution over $\calX \times \calY$ such that there exists a halfspace $\bw^*\in\bbR^d,\norm{\bw^*}_q=1$ with $\Pr_{\bx\sim\calD_{\bx}}\insquare{\inabs{\inangle{\bw^*,\bx}}>\gamma}=1$ and $y$ is generated by $h_{\bw^*}(\bx):=\sign(\inangle{\bw^*,\bx})$ corrupted with random classification noise rate $\eta<1/2$. Then, running Stochastic Mirror Descent on the following convex optimization problem:
\[\min_{\bw\in\bbR^d,\norm{\bw}_{q}\leq 1} \Ex_{(\bx,y)\sim\calD} \insquare{\ell(\bw,(\bx,y))}\]
where the convex loss function $\ell$ is defined in Equation~\ref{eqn:gsurr}, returns with high probability, a halfspace $\bw$ with $\gamma/2$-robust misclassification error $\Ex_{(\bx,y)\sim\calD}\insquare{\ind{\set{y\inangle{\bw,\bx}\leq \gamma/2}}}\leq \eta +\eps$. 
\end{theorem}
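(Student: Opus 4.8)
The plan is to recognize that, under random classification noise, $\calD$ is an instance of a \emph{generalized linear model} with a \emph{known} link function, and to learn it by running Stochastic Mirror Descent on the associated matching (Bregman-type) loss --- this will be the loss $\ell$ of Equation~\ref{eqn:gsurr}. Concretely, define the monotone, $\tfrac{1-2\eta}{\gamma}$-Lipschitz link $u\colon\bbR\to\bbR$ by $u(t)=1-2\eta$ for $t\ge\gamma$, $u(t)=-(1-2\eta)$ for $t\le-\gamma$, and $u(t)=(1-2\eta)\,t/\gamma$ for $\inabs{t}<\gamma$. Since $y$ equals $h_{\bw^*}(\bx)$ with a label flip of probability $\eta$, we have $\Ex[y\mid\bx]=(1-2\eta)\sign(\inangle{\bw^*,\bx})$, and the margin hypothesis $\inabs{\inangle{\bw^*,\bx}}>\gamma$ (a.s.) makes this exactly $u(\inangle{\bw^*,\bx})$; thus $\calD$ realizes the GLM $\Ex[y\mid\bx]=u(\inangle{\bw^*,\bx})$.

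I would then take the convex loss $\ell$ of Equation~\ref{eqn:gsurr} to be the matching loss $\ell(\bw,(\bx,y))=\int_{0}^{\inangle{\bw,\bx}}\bigl(u(s)-y\bigr)\,ds$ and set $G(\bw)\defeq\Ex_{(\bx,y)\sim\calD}\insquare{\ell(\bw,(\bx,y))}$. Viewed as a function of $t=\inangle{\bw,\bx}$, $\ell$ has second derivative $u'(t)\ge0$, so it is convex in $\bw$; its gradient $\bigl(u(\inangle{\bw,\bx})-y\bigr)\bx$ has dual ($\ell_q$) norm at most $2$ because $\norm{\bx}_p\le1$ and $\inabs{u-y}\le2$, so $G$ is $2$-Lipschitz w.r.t.\ $\norm{\cdot}_q$. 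Crucially, $\nabla G(\bw^*)=\Ex_{\bx}\insquare{\bigl(u(\inangle{\bw^*,\bx})-\Ex[y\mid\bx]\bigr)\bx}=0$, so $\bw^*$ is a global minimizer of the convex function $G$, and because $\norm{\bw^*}_q=1$ we get $\inf_{\norm{\bw}_q\le1}G(\bw)=G(\bw^*)$. Hence \autoref{lem:mirrordescent} (using the entropy potential when $q=1$, as in the proof of Theorem~\ref{thm:rcn-robust}) returns, with high probability, a point $\hat\bw$ with $\norm{\hat\bw}_q\le1$ and $G(\hat\bw)\le G(\bw^*)+\eps'$, using $\calO\bigl(1/((q-1)\eps'^{2})\bigr)$ samples when $q>1$ and $\calO\bigl(\log d/\eps'^{2}\bigr)$ when $q=1$.

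The heart of the proof is converting this $\eps'$-suboptimality into a $\gamma/2$-margin error bound. Taking the expectation over $y$ first,
\[
G(\hat\bw)-G(\bw^*)=\Ex_{\bx}\insquare{\int_{\inangle{\bw^*,\bx}}^{\inangle{\hat\bw,\bx}}\bigl(u(s)-u(\inangle{\bw^*,\bx})\bigr)\,ds}=\Ex_{\bx}\insquare{D\bigl(\inangle{\bw^*,\bx},\inangle{\hat\bw,\bx}\bigr)},
\]
where $D(a,b)\defeq\int_{a}^{b}\bigl(u(s)-u(a)\bigr)\,ds\ge0$ by monotonicity of $u$. I then lower bound $D$ on the error event $E=\set{h_{\bw^*}(\bx)\inangle{\hat\bw,\bx}\le\gamma/2}$: on $E$ we have $\inabs{\inangle{\bw^*,\bx}}>\gamma$ while $h_{\bw^*}(\bx)\inangle{\hat\bw,\bx}\le\gamma/2$, so the interval with endpoints $\inangle{\bw^*,\bx}$ and $\inangle{\hat\bw,\bx}$ contains one of $\pm[\gamma/2,\,3\gamma/4]$, on which $\inabs{u(\inangle{\bw^*,\bx})-u(s)}\ge(1-2\eta)/4$; integrating over that length-$\gamma/4$ subinterval (and using monotonicity of $u$ to discard the rest of the integrand, which is of one sign) gives $D\bigl(\inangle{\bw^*,\bx},\inangle{\hat\bw,\bx}\bigr)\ge(1-2\eta)\gamma/16$ on $E$. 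Therefore $\eps'\ge G(\hat\bw)-G(\bw^*)\ge\Pr[E]\cdot(1-2\eta)\gamma/16$.

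To conclude, note $\Ex_{y}\insquare{\ind\set{y\inangle{\hat\bw,\bx}\le\gamma/2}}\le\eta+\ind_{E}$, so taking $\Ex_{\bx}$ yields $\Ex_{(\bx,y)\sim\calD}\insquare{\ind\set{y\inangle{\hat\bw,\bx}\le\gamma/2}}\le\eta+\Pr[E]\le\eta+16\eps'/((1-2\eta)\gamma)$; choosing $\eps'=\eps\gamma(1-2\eta)/16$ makes this $\le\eta+\eps$, and plugging this $\eps'$ into the Mirror Descent sample bound gives total running time $\mathsf{poly}(d,1/\eps,1/\gamma,1/(1-2\eta))$, as claimed. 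I expect the only delicate step to be the geometric lower bound $D\ge(1-2\eta)\gamma/16$ on the error event --- this is precisely where the flat pieces of the link $u$ at $\pm\gamma$ (equivalently, the margin assumption) are exploited; the remaining ingredients are the standard matching-loss identity for GLMs and the convergence guarantee recorded in \autoref{lem:mirrordescent}.
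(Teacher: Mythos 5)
Your proposal is correct, and it follows the same skeleton as the paper's argument: model the RCN distribution as a generalized linear model with the clipped-linear (margin-saturated) link, minimize the matching loss $\ell(\bw,(\bx,y))=\int_0^{\inangle{\bw,\bx}}(u(s)-y)\,ds$ by Stochastic Mirror Descent over the $\ell_q$ ball, and convert the resulting margin guarantee on the clean labels into the $\eta+\eps$ bound on the noisy labels exactly as in your last display. Where you genuinely diverge is the middle step. The paper proceeds in two stages: Lemma~\ref{lem:link} upper-bounds the $\gamma/2$-margin error by $\frac{16}{(1-2\eta)^2}$ times the squared link loss $\Ex[(u(\inangle{\bw,\bx})-u(\inangle{\bw^*,\bx}))^2]$, and then Equation~\ref{eqn:surrogate} (quoted from prior work on surrogate losses for GLMs) bounds that squared loss by $\frac{1-2\eta}{\gamma}$ times the excess matching loss. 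You instead write the excess risk as an expected Bregman-type divergence, $G(\hat\bw)-G(\bw^*)=\Ex_{\bx}\bigl[\int_{\inangle{\bw^*,\bx}}^{\inangle{\hat\bw,\bx}}(u(s)-u(\inangle{\bw^*,\bx}))\,ds\bigr]$, and lower-bound it pointwise by $(1-2\eta)\gamma/16$ on the error event via the flat-plus-linear shape of $u$; this collapses the paper's two inequalities into one self-contained computation (with essentially the same constant) and avoids any appeal to the external surrogate-regression inequality, at the price of being a bespoke argument rather than invoking standard GLM machinery. Two cosmetic notes: you work with $\pm1$ labels and the link taking values $\pm(1-2\eta)$, whereas the paper's Equation~\ref{eqn:gsurr} uses $\{0,1\}$ labels and a link into $[\eta,1-\eta]$ --- an affine reparametrization that only changes constants --- and your phrase ``dual ($\ell_q$) norm'' of the gradient should read the norm dual to $\ell_q$, i.e.\ the $\ell_p$ norm, which is indeed at most $2$ since $\norm{\bx}_p\le1$, so the $2$-Lipschitz claim and the sample bound from \autoref{lem:mirrordescent} stand.
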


We prove \autoref{thm:rcn-robust} in the remainder of this section. We will connect our problem to that of solving generalized linear models. We define the link function as follows, 
\[ u(s) = 
    \begin{cases} 
       \eta& s < -\gamma \\
        \frac{1-2\eta}{2\gamma} s + \frac{1}{2}    & -\gamma \leq s \leq \gamma\\
        1-\eta& s > \gamma
   \end{cases}.
\]
Observe that $u$ is monotone and $\frac{1-2\eta}{2\gamma}$-Lipschitz. 

First, we will relate our loss of interest, which is the $\gamma/2$-margin loss with the squared loss defined in terms of the link function $u$, 
\begin{lem}
\label{lem:link}
For any $\bw\in\bbR^d$, 
\begin{equation*}
 \Ex_{\bx\sim\calD_{\bx}} \insquare{\ind{\set{h_{\bw^*}(\bx)\inangle{\bw,\bx}\leq \gamma/2}}} \leq  \frac{16}{(1-2\eta)^2} \Ex_{\bx\sim\calD_{\bx}}\insquare{\inparen{u(\inangle{\bw,\bx}) - u(\inangle{\bw^*,\bx})}^2}.
\end{equation*}
\end{lem}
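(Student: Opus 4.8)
\textbf{Proof proposal for Lemma~\ref{lem:link}.}
The plan is to establish the claimed inequality pointwise in $\bx$ and then take expectations. The key structural fact is the realizability assumption: since $\Pr_{\bx\sim\calD_\bx}[\,\inabs{\inangle{\bw^*,\bx}}>\gamma\,]=1$, for (almost) every $\bx$ the value $\inangle{\bw^*,\bx}$ lies outside $[-\gamma,\gamma]$, and so by the definition of the link function $u$ we have $u(\inangle{\bw^*,\bx})=1-\eta$ when $h_{\bw^*}(\bx)=+1$ and $u(\inangle{\bw^*,\bx})=\eta$ when $h_{\bw^*}(\bx)=-1$. Thus $u$ pins down the ``clean'' prediction to one of the two extreme values $\eta$ or $1-\eta$.

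Next I would show that on the bad event $\set{h_{\bw^*}(\bx)\inangle{\bw,\bx}\leq \gamma/2}$ the two link values are separated by a fixed gap: namely $\inabs{u(\inangle{\bw,\bx})-u(\inangle{\bw^*,\bx})}\geq \tfrac{1-2\eta}{4}$. This is a two-case calculation using monotonicity of $u$ (valid because $\eta<1/2$). If $h_{\bw^*}(\bx)=+1$, the event says $\inangle{\bw,\bx}\leq \gamma/2$, so $u(\inangle{\bw,\bx})\leq u(\gamma/2)=\tfrac12+\tfrac{1-2\eta}{4}$, while $u(\inangle{\bw^*,\bx})=1-\eta=\tfrac12+\tfrac{1-2\eta}{2}$, giving a difference of at least $\tfrac{1-2\eta}{4}$. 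The case $h_{\bw^*}(\bx)=-1$ is symmetric: the event says $\inangle{\bw,\bx}\geq -\gamma/2$, so $u(\inangle{\bw,\bx})\geq u(-\gamma/2)=\tfrac12-\tfrac{1-2\eta}{4}$, while $u(\inangle{\bw^*,\bx})=\eta=\tfrac12-\tfrac{1-2\eta}{2}$, again a difference of at least $\tfrac{1-2\eta}{4}$.

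Consequently, on the bad event $\inparen{u(\inangle{\bw,\bx})-u(\inangle{\bw^*,\bx})}^2\geq \tfrac{(1-2\eta)^2}{16}$, which rearranges to the pointwise bound
\[
\ind{\set{h_{\bw^*}(\bx)\inangle{\bw,\bx}\leq \gamma/2}}\leq \frac{16}{(1-2\eta)^2}\inparen{u(\inangle{\bw,\bx})-u(\inangle{\bw^*,\bx})}^2,
\]
where the inequality also holds trivially when the indicator is $0$ since the right-hand side is nonnegative. Taking $\Ex_{\bx\sim\calD_\bx}$ of both sides yields the statement. I do not expect a genuine obstacle here; the only care needed is in the case analysis on $\sign(\inangle{\bw^*,\bx})$ and verifying that the gap at the threshold $\pm\gamma/2$ works out to exactly $\tfrac{1-2\eta}{4}$, together with noting that $\eta<1/2$ makes $u$ monotone nondecreasing so that evaluating at the endpoint of the event region is legitimate.
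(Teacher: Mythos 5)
Your proposal is correct and follows essentially the same route as the paper: use the margin assumption to pin $u(\inangle{\bw^*,\bx})$ at $1-\eta$ or $\eta$, then case on $\sign(\inangle{\bw^*,\bx})$ and use monotonicity of $u$ at the thresholds $\pm\gamma/2$ to get the gap $\tfrac{1-2\eta}{4}$, hence the factor $\tfrac{16}{(1-2\eta)^2}$. The only cosmetic difference is that you state the bound pointwise before taking expectations, whereas the paper carries the indicators inside the expectations throughout.
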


\begin{proof}
Let $E^+=\set{h_{\bw^*}(\bx)=+}$ and $E^-=\set{h_{\bw^*}(\bx)=-}$. By law of total expectation and definition of the link function $u$, we have
{\small
\begin{align*}
    \Ex_{\bx\sim\calD_{\bx}} \insquare{\inparen{u(\inangle{\bw,\bx}) - u(\inangle{\bw^*,\bx})}^2} &= \Ex_{\bx\sim\calD_{\bx}} \insquare{\inparen{u(\inangle{\bw,\bx}) - u(\inangle{\bw^*,\bx})}^2\ind{\set{E^+}}} + \Ex_{\bx\sim\calD_{\bx}} \insquare{\inparen{u(\inangle{\bw,\bx}) - u(\inangle{\bw^*,\bx})}^2\ind{\set{E^-}}}\\
    &= \Ex_{\bx\sim\calD_{\bx}} \insquare{\inparen{u(\inangle{\bw,\bx}) - (1-\eta)}^2\ind{\set{E^+}}} + \Ex_{\bx\sim\calD_{\bx}} \insquare{\inparen{u(\inangle{\bw,\bx}) - \eta}^2\ind{\set{E^-}}}.
\end{align*}
}
We will lower bound both terms:
\begin{align*}
    \Ex_{\bx\sim\calD_{\bx}} \insquare{\inparen{u(\inangle{\bw,\bx}) - (1-\eta)}^2\ind{\set{E^+}}} &\geq \Ex_{\bx\sim\calD_{\bx}} \insquare{\inparen{a - (1-\eta)}^2\ind{\set{E^+}}\ind{\set{u(\inangle{\bw,\bx})\leq a}}}\\
    \Ex_{\bx\sim\calD_{\bx}} \insquare{\inparen{u(\inangle{\bw,\bx}) - \eta}^2\ind{\set{E^-}}} &\geq \Ex_{\bx\sim\calD_{\bx}} \insquare{\inparen{b - \eta}^2\ind{\set{E^-}}\ind{\set{u(\inangle{\bw,\bx})\geq b}}}
\end{align*}

Then, observe that the event $\set{\inangle{\bw,\bx}\leq \gamma/2}$ implies the event $\set{u(\inangle{\bw,\bx})\leq \frac{3-2\eta}{4}}$, and similarly the event $\set{\inangle{\bw,\bx}\geq -\gamma/2}$ implies the event $\set{u(\inangle{\bw,\bx})\geq \frac{2\eta+1}{4}}$. This means that 
{\footnotesize
\begin{align*}
    \Ex_{\bx\sim\calD_{\bx}} \insquare{\inparen{\frac{3-2\eta}{4} - (1-\eta)}^2\ind{\set{E^+}}\ind{\set{u(\inangle{\bw,\bx})\leq \frac{3-2\eta}{4}}}} &\geq \inparen{\frac{3-2\eta}{4} - (1-\eta)}^2\Ex_{\bx\sim\calD_{\bx}}\insquare{\ind{\set{E^+}}\ind{\set{\inangle{\bw,\bx}\leq \gamma/2}}}\\
    \Ex_{\bx\sim\calD_{\bx}} \insquare{\inparen{\frac{2\eta+1}{4} - \eta}^2\ind{\set{E^-}}\ind{\set{u(\inangle{\bw,\bx})\geq \frac{2\eta+1}{4}}}} &\geq \inparen{\frac{2\eta+1}{4} - \eta}^2 \Ex_{\bx\sim\calD_{\bx}} \insquare{\ind{\set{E^-}}\ind{\set{\inangle{\bw,\bx}\geq -\gamma/2}}}
\end{align*}
}
We combine these observations to conclude the proof,
\begin{align*}
    \Ex_{\bx\sim\calD_{\bx}}\insquare{\inparen{u(\inangle{\bw,\bx}) - u(\inangle{\bw^*,\bx})}^2} &\geq \frac{(1-2\eta)^2}{16}\Ex_{\bx\sim\calD_{\bx}}\insquare{\ind{\set{E^+}}\ind{\set{\inangle{\bw,\bx}\leq \gamma/2}}+\ind{\set{E^-}}\ind{\set{\inangle{\bw,\bx}\geq -\gamma/2}}}\\
    &\geq \frac{(1-2\eta)^2}{16}\Ex_{\bx\sim\calD_{\bx}} \insquare{\ind{\set{h_{\bw^*}(\bx)\inangle{\bw,\bx}\leq \gamma/2}}}.
\end{align*}
\end{proof}

But note that the squared loss is non-convex and so it may not be easy to optimize. Luckily, we can get a tight upper-bound with the following surrogate loss (see \cite{varun}):
\begin{equation}
\label{eqn:gsurr}
    \ell(\bw,(\bx,y)) = \int_{0}^{\inangle{\bw,\bx}} (u(s) - y)ds.
\end{equation}
Note that $\ell(\bw,(\bx,y))$ is convex w.r.t $\bw$ since the Hessian $\nabla_{\bw}^2\ell(\bw, (\bx,y)) = u'(\inangle{\bw,\bx})\bx\bx^T$ is positive semi-definite. 

Assuming our labels $y$ have been transformed to $\{0,1\}$ from $\{\pm 1\}$, observe that $\Ex[y|x] = u(\inangle{\bw^*,\bx})$. We now have the following guarantee (see e.g., \cite{cohen2014surrogate,varun}):
\begin{equation}
\label{eqn:surrogate}
\Ex_{\bx\sim\calD_{\bx}} \insquare{\inparen{u(\inangle{\bw,\bx}) - u(\inangle{\bw^*,\bx})}^2} \leq 2 \frac{1-2\eta}{2\gamma} \Ex_{(\bx,y)\sim\calD} \insquare{\ell(\bw,(\bx,y)) - \ell(\bw^*,(\bx,y))}.
\end{equation}

\begin{proof}[Proof of Theorem \ref{thm:rcn-robust-2}]
Combining \autoref{lem:link} and \autoref{eqn:surrogate}, we get the following guarantee for any $\bw\in\bbR^d$,
\[(1-2\eta)\Ex_{\bx\sim\calD_{\bx}} \insquare{\ind{\set{h_{\bw^*}(\bx)\inangle{\bw,\bx}\leq \gamma/2}}} \leq \frac{16}{\gamma}\Ex_{(\bx,y)\sim\calD}\insquare{\ell(\bw,(\bx,y)) - \ell(\bw^*,(\bx,y))}.\]

Thus, running Stochastic Mirror Descent with $\eps'=(\eps\gamma(1-2\eta))/16$ and $O(1/\eps'^2)$ samples (labels transformed to $\{0,1\}$), returns with high probability, a halfspace $\bw$ such that 
\begin{equation}
\label{eqn:10-margin-upper}
    \Ex_{\bx\sim\calD_{\bx}} \insquare{\ind{\set{h_{\bw^*}(\bx)\inangle{\bw,\bx}\leq \gamma/2}}}\leq \eps.
\end{equation}

Then, to conclude the proof, observe that
\begin{align*}
    \Ex_{(\bx,y)\sim\calD}\insquare{\ind{\set{y\inangle{\bw,\bx}\leq \gamma/2}}} &= \eta \Ex_{\bx\sim\calD_\bx}\insquare{\ind{\set{-h_{\bw^*}(\bx)\inangle{\bw,\bx}\leq \gamma/2}}} + (1-\eta)\Ex_{\bx\sim\calD_\bx}\insquare{\ind{\set{h_{\bw^*}(\bx)\inangle{\bw,\bx}\leq \gamma/2}}}\\
    &= \eta (1-\Ex_{\bx\sim\calD_\bx}\insquare{\ind{\set{h_{\bw^*}(\bx)\inangle{\bw,\bx}\le -\gamma/2}}}) +(1-\eta) \Ex_{\bx\sim\calD_\bx}\insquare{\ind{\set{h_{\bw^*}(\bx)\inangle{\bw,\bx}\leq \gamma/2}}}\\
    &\stackrel{(i)}{\leq} \eta + (1-\eta)\Ex_{\bx\sim\calD_\bx}\insquare{\ind{\set{h_{\bw^*}(\bx)\inangle{\bw,\bx}\leq \gamma/2}}}\\
    &\stackrel{(ii)}{\leq} \eta + \eps,
\end{align*}
where $(i)$ follows from that fact that $\Ex_{\bx\sim\calD_\bx}\insquare{\ind{\set{h_{\bw^*}(\bx)\inangle{\bw,\bx}\le -\gamma/2}}}\geq 0$, and $(ii)$ follows from Equation~\ref{eqn:10-margin-upper}. 
\end{proof}
\end{document}